\newtheorem{theorem}{Theorem}
\theoremstyle{remark}
\begin{document}

\title{CCSL: A Causal Structure Learning Method from Multiple Unknown Environments}

\author{
		Wei Chen,
		Yunjin Wu,
        Ruichu Cai*,~\IEEEmembership{Senior Member,~IEEE},
        Yueguo Chen,
        Zhifeng Hao,~\IEEEmembership{Senior Member,~IEEE}
\thanks{Manuscript received XX; revised XX; accepted XX. Date of publication XX XX, 2020; date of current version XX XX, 2020. This research was supported in part by National Key R\&D Program of China (2021ZD0111501), National Science Fund for Excellent Young Scholars (62122022), Natural Science Foundation of China (61876043, 61976052), the major key project of PCL (PCL2021A12), and China Postdoctoral Science Foundation (2021M690734).  (\emph{Corresponding author: Ruichu Cai.})
}

\thanks{Wei Chen is with the School of Computer, Guangdong University of Technology, Guangzhou 510006, China. \protect e-mail: chenweiDelight@gmail.com.}
\thanks{Yunjin Wu is with the School of Computer, Guangdong University of Technology, Guangzhou 510006, China. \protect e-mail: wwuyunjin@gmail.com.}
\thanks{Ruichu Cai is with the School of Computer, Guangdong University of
Technology, Guangzhou 510006, China, and also with Peng Cheng Laboratory, Shenzhen, China 518066, China. \protect e-mail: cairuichu@gmail.com.}
\thanks{Yueguo Chen is with Key Laboratory of Data Engineering and Knowledge Engineering (MOE), Renmin University of China, Beijing 100872, China. \protect e-mail: chenyueguo@ruc.edu.cn}
\thanks{Zhifeng Hao is with the School of Computer, Guangdong University of Technology, Guangzhou 510330, China and College of Science, Shantou University, Shantou 515063, China. \protect  e-mail: zfhao@gdut.edu.cn.}
\thanks{* Corresponding author.}
}

\markboth{IEEE Transactions on Neural Networks and Learning Systems,~submitted}%
{Wei Chen \MakeLowercase{\textit{et al.}}: CCSL: A Causal Cluster Structure Learning Method from Multiple Unknown Environments}

\maketitle
\begin{abstract}
   Most existing causal structure learning methods assume data collected from one environment and independent and identically distributed (i.i.d.). In some cases, data are collected from different subjects from multiple environments, which provides more information but might make the data non-identical or non-independent distribution. Some previous efforts try to learn causal structure from this type of data in two independent stages, i.e., first discovering i.i.d. groups from non-i.i.d. samples, then learning the causal structures from different groups. This straightforward solution ignores the intrinsic connections between the two stages, that is both the clustering stage and the learning stage should be guided by the same causal mechanism. Towards this end, we propose a unified Causal Cluster Structures Learning (named CCSL) method for causal discovery from non-i.i.d. data. This method simultaneously integrates the following two tasks: 1) clustering samples of the subjects with the same causal mechanism into different groups; 2) learning causal structures from the samples within the group. Specifically, for the former, we provide a Causality-related Chinese Restaurant Process to cluster samples based on the similarity of the causal structure; for the latter, we introduce a variational-inference-based approach to learn the causal structures. Theoretical results provide identification of the causal model and the clustering model under the linear non-Gaussian assumption. Experimental results on both simulated and real-world data further validate the correctness and effectiveness of the proposed method.
\end{abstract}
\begin{IEEEkeywords}
causal discovery, causal clustering, multiple unknown environments, causal structural learning.
\end{IEEEkeywords}

\section{Introduction}

Causal structure learning is an important method for exploring the data generation mechanism. These data generation mechanism assist interventions and decision-making, which is used in a wide range of fields such as neuroscience \cite{huang2019specific}, bioinformatics \cite{sachs2005causal}, social network analysis \cite{chen2020mining} and so on.

Typical methods for learning causal structure among observed variables include constraint-based method \cite{spirtes2000causation}, score-based method \cite{chickering2002optimal}\cite{huang2018generalized} and functional-based method \cite{shimizu2006linear}\cite{hoyer2008nonlinear}\cite{zhang2009identifiability}. Constraint-based method utilizes (conditional) independence test to determine whether there exists causal relationship between two variables, while score-based method tries to find the causal structure to obtain the best score. Typical functional-based methods include Linear Non-Gaussian Acyclic Model (LiNGAM) \cite{shimizu2006linear}, Additive Noise Model(ANM) \cite{hoyer2008nonlinear} and Post NonLinear model (PNL) \cite{zhang2009identifiability}, which are based on the assumption of data generation process, assuming a linear or nonlinear causal relationship between observations. A number of subsequent approaches are extensions of functional-based method, including time-series data scenarios \cite{hyvarinen2010estimation}, hidden variables case \cite{hoyer2008estimation} \cite{chen2021causal}, and cyclic causal graph \cite{lacerda2008discovering}.
 
However, the majority of existing causal structure learning approaches assume data are collected from same environment and are independent and identical distribution (i.i.d.), which is usually violated in real context where the observed data are collected from multiple unknown environments. Multiple environments provide multiple interventions on a causal mechanism, and arise multiple different causal mechanisms. Different causal mechanisms imply different causal structures, and generate different samples. Consequently, these samples are not independently and identically distributed (non-i.i.d.). An illustration of the data generation process is shown in Figure \ref{fig:gen_process}. When applying the existing methods that assume i.i.d. data to those kind of samples, the dependencies between samples from multiple unknown environments will give rise to spurious causal relationships, making the results of causal structure learning unreliable \cite{ramsey2011meta}\cite{zhang2017causal}.

Recently, many researches tackle non-i.i.d data causal discovery problem using two independent stages: clustering stage and causal structure learning stage \cite{huang2019specific}\cite{wang2020high}\cite{zhang2020unmixing}. That is, first cluster multiple samples into several groups by traditional clustering method, and then learn causal structures from the clustered data. Generally, the traditional clustering method considers the similarity of features, which may be unstable with different environments. On the contrary, the causal relationships among features for each subject is stable and invariant with different environments \cite{zhang2020unmixing}. Because the causal relationships are reflected by the data generation mechanism, they are the inherent nature of each subject. For example, in fMRI data analysis, a few regions of interest (ROIs) interact differently when the subject performs different activities like opening and closing the eyes. If considering all the relationships among ROIs, some ROIs in different activities may be similar, while the causal mechanisms behind different activities are different. Some works pay attention to this problem, and cluster samples based on the learned causal structures. But these methods require prior information like the number of groups so that they can obtain the causal structures for known groups (or environments) as the shared causal structures for different clusters \cite{hu2018causal}. Then the learned shared structures are used for new samples clustering, so they cannot handle outlier samples. 

\begin{figure}[t]
	\centering
	\includegraphics[width=0.9\linewidth]{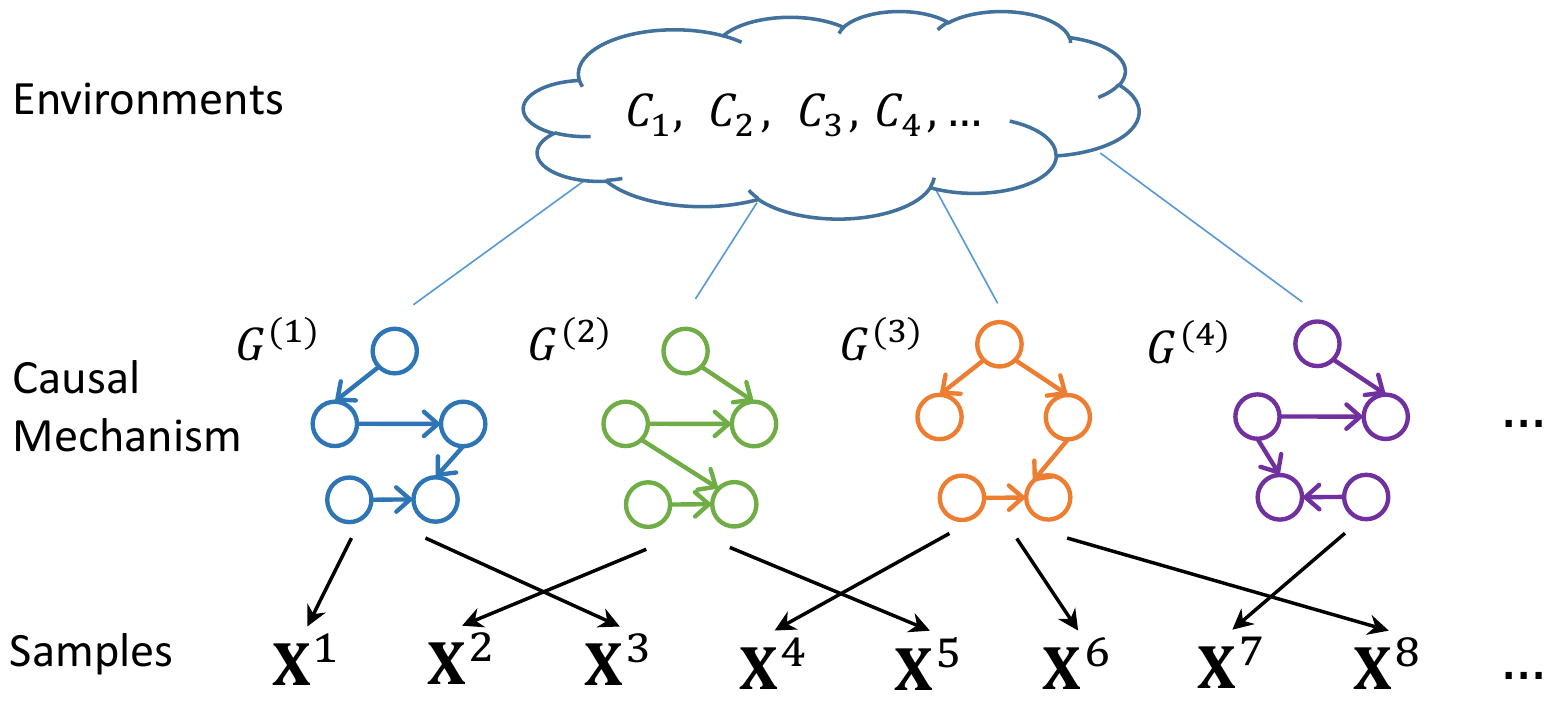}
	\caption{An illustration of the data generation process from multiple unknown environments. $C_1,C_2,\dots$ denote the multiple environments, $G^{(1)},G^{(2)},\dots$ denote the corresponding causal structures for the unknown environments, and $\mathbf{X}^s$ ($s\in \{1,2,\dots,\}$) denote sample for $s$-th subject. From the top to the bottom, multiple unknown environments imply multiple unknown causal mechanisms. Based on the causal mechanisms, multiple samples are generated. }
	\label{fig:gen_process}
\end{figure}

From the above analysis, we find that subjects grouping benefits from the causal mechanism, and the causal structure learning aims to recover causal mechanism from data. In another word, the causal mechanism acts as a mediator between the clustering and the structure learning. Therefore, we introduce a unified method, named Causal Clustering Structure Learning (CCSL) method, to leaning the causal structures from non i.i.d. data without requiring knowledge of the number of groups. This method simultaneously groups subjects that imply same causal structure into the same group and learns the causal structure of each group. In detail, inspired by the Chinese restaurant process, we provide a new causality-based grouping method named Causal Chinese Restaurant Process (Causal CRP). At the same time, the causal structural model can be estimated by a variational inference based approach. These two steps are embedded together and updated the parameters simultaneously, which outputs the groups information and the causal structure for each group.

\section{Related Work}
Conventional methods for causal structure learning from non i.i.d. data cluster samples into different groups by cluster methods, and then learn the causal structure from those i.i.d groups separately. So in this section, we review on samples clustering methods and causal structure learning methods. 

\subsection{Samples Clustering Methods}
Typical methods for clustering samples contains hierarchical clustering methods and partition clustering methods \cite{saxena2017review}. Partition clustering methods are relevant to our work, since they categorize different data from multiple subjects into $K$ groups based on distance measures like Euclidean distance and Dynamic Time Warping (DTW). Kmeans is a typical clustering approach, in which the given sample is clustered through a defined number of clusters $k$. Given $k$, every subject is assigned into the cluster to which the nearest cluster centroid belongs by Kmeans. Density-based spatial clustering of applications with noise (DBSCAN) \cite{Ester1996dbscan} and Ordering Points To Identify the Clustering Structure (OPTICS) \cite{ankerst1999optics} are density-based clustering method ordering based on the concept of maximal density-reachability. Besides, Chinese restaurant process (CRP) based methods \cite{aldous1985exchangeability} \cite{blei2011distance} provide Bayesian nonparametric clustering algorithms for high dimensional data analysis, and there are many extension methods based on CRP. Existing clustering methods consider similarity of variables as measures, which may be unstable when the samples collected from different environment. As a result, this method cluster non-i.i.d. samples together with several spurious similarities between different samples. Instead, causal structure among variables is more stable for each subject, because it contains the data generation process. So some studies \cite{hu2018causal}\cite{huang2019specific} used the causal edges of structure as features for clustering, but they need to recover the causal relationships for every subject first. Additionally, they need to know the number of groups, thus their performance is affected by the outlier samples. 

\subsection{Causal Structure Learning Methods}
The methods of learning causal structure include constraint-based methods and function-based methods. Constraint-based methods \cite{spirtes2000causation} use (conditional) independence test to remove independent causal relationships, and then orient the causal direction by using V-structure and Meek rules. But the main disadvantage of this method is the remaining Markov Equivalent class, i.e., some casual directions imply the same (conditional) independence conditions so that they can not be determined. Function-based methods \cite{shimizu2006linear} focus on the data generating process, then propose a structure causal model with some assumptions and provide a method to estimate the causal model. Considering the temporal information, the causal structure learning methods based on time series data are generated, such as PCMCI \cite{runge2018causal} \cite{runge2019detecting}, VAR-LiNGAM \cite{hyvarinen2010estimation}, dynotears \cite{pamfil2020dynotears} and so on. Nevertheless, these methods assume that the data were collected from an independent and identical distribution (i.i.d.), which fail to recover the different causal structures for different subjects.

Some researchers care about the above problem and present methods for relaxing the i.i.d. assumption. One kind of method is based on the mixture of DAGs \cite{pashami2018causal}\cite{saeed2020causal}\cite{zhang2020unmixing}. The idea of \cite{pashami2018causal}\cite{saeed2020causal} introduce an extra variable or using graphical modeling to reconstruct the graph. Because these methods are related to the constraint-based method, their results naturally remain the Markov equivalent class problem. For the linear, Gaussian data, Zhang et al. \cite{zhang2020unmixing} proposed a method to discover the causal relationship from data that are collected from mixed distributions. Another kind of method relies on cluster causal functional models, such as Specific and Shared Causal Model (SSCM) \cite{huang2019specific}, ANM Mixture Model (ANM-MM) \cite{hu2018causal}. The common idea of these methods is estimating the causal graphs for different subjects, and then clustering the estimated graphs by clustering methods. But these methods cannot fully utilize the data in the same group and require grouping information of samples, which are unreliable in the case there is a new outlier when clustering. In contrast to the existing method, our approach considers causal information during clustering and structure learning simultaneously, thereby solving the above issues.

\section{Causal Structure Learning from multiple unknown environments data}

In this section, we propose the causal clustering structure model, and then provide a practical solution to estimate this model. At last, the theory analyses on identifiablity and soundness are also provided.


Given observed time series (data) $\mathbf{X}=\{\mathbf{X}^1,\mathbf{X}^2,\dots,\mathbf{X}^n\}$ with time length $T$, they can be clustered into $q$ groups and imply different causal structures in different groups. Among these data, the number of variables in each $\mathbf{X}^{s} \in \mathbf{X}$ is $m$. Supposed that we do not know which observed time series (data) belongs to which group. In this paper, we aim to group $\mathbf{X}$ into $q$ clusters (Note that $q$ is not fixed but determined automatically) with the same causal structure, and in the meanwhile, learn the same causal structure over $m$ variables within the same groups.

To solve the above problem, we propose a unified model, named causal cluster structural model, which groups all subjects and learns the causal structures simultaneously. For the former, we introduce a new causal cluster discovery method named Causality-related Chinese Restaurant Process. For the latter, we learn the causal structure for each group based on the Structural Equation Model. An illustration of the proposed CCSL model is given in Figure \ref{fig:causal_CRP}. The detail of these methods will be introduced in the following subsections.

\subsection{Cluster Generation Model}
According to the problem that data for the subjects in the same group implies the same causal structure, we use a cluster to represent a causal structure. Suppose the data for several subject $\mathbf{X}=\{\mathbf{X}^1,\mathbf{X}^2,\dots,\mathbf{X}^s,\dots \}$ implies same causal structure $\mathbf{G}^k$. Let $c_s$ denote the index of the cluster to which they belong. Then, the causal clustering discovery task depends on the calculation of the conditional probability $p(c_s = k \mid \mathbf{X}^s)$. Note that we do not know the causal structures for each cluster. Considering the conditional probability, it can be incorporated into the CRP-based method that is based on the Bayesian nonparametric algorithm. Inspired by the CRP algorithm, we propose a Causality-related Chinese Restaurant Process (CausalCRP) for causal cluster discovery.

Before introducing CausalCRP, we provide a brief introduction of the traditional Chinese Restaurant Process (CRP). CRP is a discrete-time stochastic process, in which the probability of a customer sitting at a table is computed from the number of other customers already sitting at that table \cite{aldous1985exchangeability}. Considering $N$ customers, let $c_s$ denote the table assignment of the $s$-th customer and $n_k$ denote the number of customers sitting at table $k$. Assume that the customers $\{c_1,c_2,\dots, c_{(s-1)} \}$ occupy $K$ tables, the CRP draws the probability of each $c_s$ assigned to table $k$ as
    
\begin{equation}
    p(c_s=k|c_{1:(s-1)},\alpha) \propto 
        \begin{cases}
           n_k & \text{for $k \leq K$},  \\
           \alpha &  \text{for $k=K+1$}, 
        \end{cases} 
\label{CRP}
\end{equation}
where $\alpha$ is a given scaling parameter. When all customers have been seated, their table assignments provide a random partition.

Inspired by the CRP, we propose the Causality related Chinese Restaurant Process (Causal CRP) that considers the causal relationships among variables within groups. In Causal CRP, a table implies a causal structure. So the subjects that are assigned to the same table imply the same causal structure.
Let $c_s$ denote the group assignment of the $s$-th subject, then the probability of each $c_s$ assigned to table $k$, $p(c_s=k|c_{1:(s-1)},\alpha)$, is formalized as

\begin{equation}
    p(c_s=k|c_{1:(s-1)},\alpha) \propto 
        \begin{cases}
           P(c_s = k \mid \mathbf{X}^{s}) & \text{for $k \leq K$},  \\
           \alpha &  \text{for $k=K+1$}. 
        \end{cases} 
\label{eq:causalCRP}
\end{equation}
where $\alpha$ is a given scaling parameter, and $P(c_s = k \mid \mathbf{X}^{s})$ implies causal structure information.


Based on the given data, we can cluster samples into $q$ (determinated by the Causal CRP) groups, by estimating $P(c_s = k \mid \mathbf{X}^{s})$, for $k=1,2,\dots$, where
\begin{equation}
    P(c_s = k \mid \mathbf{X}^{s}) \propto P(\mathbf{X}^{s} \mid c_s = k) P(c_s = k).
    \label{eq:c_s=k}
\end{equation}


\begin{figure}[t]
	\centering
	\includegraphics[width=0.9\linewidth]{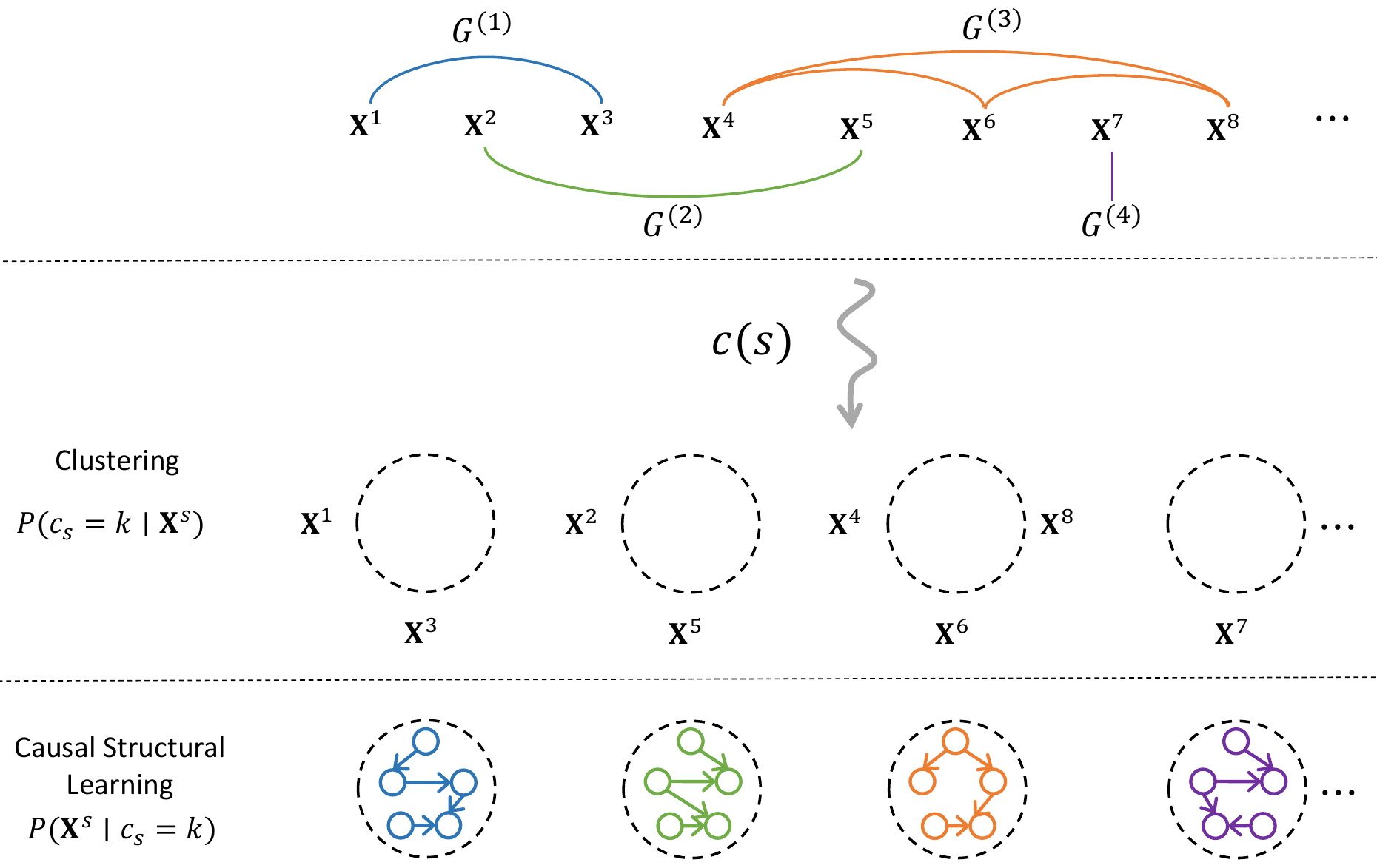}
	\caption{An illustration of the Causal Clustering Structure Model. The process operates at the level of subject assignments, where each subject is assigned to the table according to Equation (\ref{eq:causalCRP}). The blue lines connect the subjects that have the same or most similar causal structures. $G_i$ denotes the causal structure that is implied by the subjects in the $i-$th table.}
	\label{fig:causal_CRP}
\end{figure}

\subsection{Causal Structure Model for Each Cluster}

When using causal CRP, $P(c_s=k \mid \mathbf{X}^s)$ is calculated based on the known causal structure for table $k$. The causal structure is not given and should be learned from data. Considering the data generation mechanism, an intuitive way is to utilize the Functional Causal Model (FCM) to construct the causal relationships among observed variables. But there is a problem that the data is non-iid. So we need to propose a new specific functional causal model for each subject or cluster.  

In this paper, we focus on the the causal relationships among variables are linear and the noise of each variable is non-Gaussian, which are the common data generation assumptions for FCM. Considering the time series data and their time lag influences and instantanous effect, we propose the functional causal model with instantaneous effect and time-lagged effect for each individual subject as \cite{hyvarinen2010estimation}. Let $x_i^s (t)$ denote $i$-th variable of subject $s$ at time $t$ in a group. Assume the time lag of causal effect is $p_l$, then the FCM with instantaneous and time-lagged effect is defined as

\begin{equation}
    x_i^s (t) =\sum_{j\in pa_i^s} b_{ij}^s x_j^s(t)+\sum_{p=1}^{p_l}\sum_{j\in L_{i}^{s}} a_{ij,p}^{s} x_{j}^s(t-p)+e_{i}^{s}(t),
    \label{eq:model_i}
\end{equation}
for $i=1,2,\dots,m$, where $b_{ij}^s$ denotes the instantaneous causal influences from variable $j$ to $i$ in the $s$-th subject, $pa_i^s$ denotes the set of indexes parents of $x_i^s$, $a_{ij,p}^{s}$ represents the causal influences from variable $j$ to $i$ with time lag $p$, and $L_{i}^{s}$ denotes the set of indexes of parents of $x_i^s$ with time lag. Each subject has fixed causal coefficients $b_{ij}^{s}$ and $a_{ij,p}^{s}$, which is almost same in the same group and may be different in the different groups. The noise term $e_{i}^{s}$ is non-Gaussian, and independent with each other. In the same time, $e_{i}^{s}$ is independent of $x_j^s(t)$ and $x_j^s(t-p)$, for all $j,p\in \mathcal{N}^+$. 

In the matrix form, Eq. (\ref{eq:model_i}) can be written as
\begin{equation}
    X^{s}(t) = B^{s}X^{s}(t)+\sum_{p=1}^{p_l}A_{p}^{s}X^{s}(t-p)+E^{s}(t),
\end{equation}
where $B^{s}$ is the $m\times m$ causal strength matrix of instantaneous effect, $A_{p}^{s}$ is the $m\times m$ causal strength matrix of time-lagged effect, and $E^{s}(t)$ is the noise term that is independent with that of other variables.

We assume that the causal structures for different subjects in the same group are the same or mostly the same. Let $X^{(k)}(t)$ denote the variables in group $k$ at time $t$. Thus, for each group $k$, the group-specific causal model can be formalized as
\begin{equation}
    X^{(k)}(t) = B^{(k)}X^{(k)}(t)+\sum_{p=1}^{p_l}A_{p}^{(k)}X^{(k)}(t-p)+E^{(k)}(t),
    \label{eq:model_group}
\end{equation}
where $B^{(k)}$ is the causal strength matrix of instantaneous effect, $A_{p}^{(k)}$ is the time-lagged effect at time $(t-p)$, and $E^{(k)}(t)$ is the noise term for each group $k$ at time $t$.

\section{Practical Solution}

Based on the above analysis, we propose a novel method for causal clustering structure learning (CCSL) by using variational inference. First, we formalize the objective function and provide the prior on parameters. Then, the detail of the algorithm is provided.

\subsection{Objective Function and Priors}
Suppose that the observed multiple groups time series data $\mathbf{X}$ are generated by different causal mechanisms. Let $\Theta$ denote a set of all model parameters, i.e., $\Theta = \left\{\pi_{k}, \, \mu_{k, i j}, \, \sigma_{k, i j},\, \nu_{k, i j, p},\, \omega_{k, i j, p}, \, \pi_{ k^{\prime}}^{E^{(k)}}, \, \vec{\mu}_{ k^{\prime}}^{E^{(k)}}, \, \Sigma_{ k^{\prime}}^{E^{(k)}} \right\}$, where $k=1,2,\dots, q$, $i,j = 1,2, \dots, m$ and $p=1,2,\dots, p_l$. 
Given observed data, the marginal log-likelihood or the evidence as
\begin{equation}
    \begin{aligned}
    \mathcal{L}(\mathbf{X}) & = \log \int\int p(X,B,\{A_p\})dBd\{A_p\}\\
    & = \log \int\int p(B,\{A_p\} \mid X) p(X) dBd\{A_p\},    
    \end{aligned}
    \label{eq:L_x}
\end{equation}
where $\{A_p\} = \left \{ \{a_{ij,p} \}_{p=1}^{p_l}, i,j\in \{1,2,\dots,m \} \right \}$ and $B = \{b_{ij} \}_{i,j=1}^m$.

The variable inference (VI) method approximates the posterior distribution of the model parameters given the observations. So we use VI method to solve the above objective function. To obtain a tractable posterior distribution $p(B, \{A_p\} \mid X)$, we make the mean field assumption and approximate $p(B, \{A_p\} \mid X)$ with the factorized distribution $q_{\Theta}(B, \{A_p\}) =  q_{\Theta}(B)q_{\Theta}(\{A_p\})$. Using the factorized posterior distribution, we can obtain the posterior of instantaneous causal strength $B$ and time-lag causal influence $\{A_p\}$ independently.

Then, the log likelihood of observed data can be written as:
\begin{equation}
    \log p(X) = \mathcal{L}(q_{\Theta}) + KL(q_{\Theta} \| p),
    \label{eq:log_px}
\end{equation}
where $\mathcal{L}(q_{\Theta})$ is the evidence lower bound (ELBO), which can be formalized as
\begin{equation}
    \begin{aligned}
    \mathcal{L}(q_{\Theta})  = & \int \int q_{\Theta}(B, \{A_p\}) \log \left \{ \frac{p(X, B, \{A_p\})}{q_{\Theta}(B, \{A_p\})} \right \}d B d {\{A_p\}} \\
     = &  E_{q_{\Theta}(B,\{A_p\})}[\log p(X,B, \{A_p\})- \log q_{\Theta}(B, \{A_p\})] \\
     = &  E_{q_{\Theta}(B,\{A_p\})} \log p(X \mid B, \{A_p\}) ]\\
     + & E_{q_{\Theta}(B,\{A_p\})}[ \log p(B,\{A_p\}) - \log q_{\Theta}(B, \{A_p\}) ] \\
     = &  \mathcal{L}_{ell} + \mathcal{L}_{kl},
    \end{aligned}
    \label{eq:L_q}
\end{equation}
where $\mathcal{L}_{ell} = E_{q_{\Theta}(B,\{A_p\})}[\log p(X \mid B, \{A_p\})]$ and $\mathcal{L}_{kl} = KL(q_{\Theta}(B,\{A_p\} \| p(B,\{A_p\})$.

In order to make the $q_{\Theta}(B,\{A_p\})$ close to $p(B,\{A_p\} \mid X)$, we would minimize the KL divergence $KL(q_{\Theta} \| p)$. But this KL divergence contains the posterior density that we want to infer, so it is intractable. From Eq. (\ref{eq:log_px}), we can instead of maximize the ELBO $\mathcal{L}(q_{\Theta})$ (i.e., Eq. (\ref{eq:L_q})). 

Let $T$ denote the sample size for $\mathbf{X}$. For a certain group, the marginal probability of $p(X \mid B,\{A_p\})$ is
\begin{equation}
    \begin{aligned}
    & p(X \mid B, \{A_p\}) \\
     = &  p(X(1:T) \mid B, \{A_p\}) \\
     = & \left[ \prod_{t=p_l+1}^{T}p(X(t) \mid X(t-p_l:t-1), B, \{A_p\}) \right ] \\
     & \cdot \left[ \prod_{t=2}^{p_l} p(X(t)\mid X(1:t-1), B, \{A_p\}) \right ] \cdot p(X(1)). \\
    \end{aligned}
    \label{eq:p_x_marginal}
\end{equation}

To solve the objective function, we provide the prior on parameters. Considering the variation across groups and a similar causal model within each group, we regard the causal strength $b_{ij}^{(k)}$ in Eq. (\ref{eq:model_i}) as a random variable, and assume that $b_{ij}^{(k)}$ follows a Gaussian distribution in each group, while the Gaussian distributions vary across groups.
Then, the probability distribution of $b_{ij}^{(k)}$ can be formalized as
\begin{equation}
    \begin{aligned}
    P(b_{ij}^{(k)}) =  \mathcal{N}(b_{ij}^{(k)}\mid \mu_{k,ij},\sigma_{k,ij}^2),
    \end{aligned}
    \label{eq:prior_b}
\end{equation}
where $\mathcal{N}(\cdot)$ denotes a Gaussian distribution with mean $\mu_{k,ij}$ and variance $\sigma_{k,ij}^2$. Similarly, for all $i$, $j$, $p$ of $a_{ij,p}^{(k)}$ in group $k$, we have
\begin{equation}
    \begin{aligned}
    P(a_{i j, p}^{(k)} ) = \mathcal{N}(a_{ij, p}^{(k)} \mid \nu_{k, ij, p}, \omega_{k, ij, p}^2).
    \end{aligned}
    \label{eq:prior_a}
\end{equation}

We also allow the distributions of noise terms vary across different groups but remain the same within the same group. More specifically, we model the non-Gaussian noise in each group with a mixture of Gaussian. Denote $C^{E^{(k)}}$ as the indicator of the Gaussian mixture of $E^{(k)}$, with $C^{E^{(k)}} = (c_1^{E^{(k)}}, c_2^{E^{(k)}}, \dots, c_{q'}^{E^{(k)}})$, and thus in group $k$, the distribution of $E^{(k)}$ is
\begin{equation}
    \begin{aligned}
    P(E^{(k)} ) &=\sum_{k^{\prime}=1}^{q^{\prime}} P(c_s^{E^{(k)}}=k' ) P(E^{(k)} \mid c_s^{E^{(k)}}=k') \\
    & = \sum_{k^{\prime}=1}^{q^{\prime}} \pi_{k'}^{E^{(k)}} \mathcal{N}(E^{(k)} \mid \Vec{\mu}_{k'}^{E^{(k)}},\Sigma_{k'}^{E^{(k)}}),
    \end{aligned}
    \label{eq:prior_e}
\end{equation}
where $P\left(E^{(k)} \mid c_s^{E^{(k)}}=k' \right) = \mathcal{N}(E^{(k)} \mid \Vec{\mu}_{k'}^{E^{(k)}},\Sigma_{k'}^{E^{(k)}})$,  $ P(c_s^{E^{(k)}}=k' ) = \pi_{k'}^{E^{(k)}}$ and $\sum_{k'=1}^{q'}\pi_{k'} = 1$. 

With the prior of the parameters, the Eq. (\ref{eq:p_x_marginal}) can be written down as
\begin{equation}
\begin{aligned}
    & p(X(t) \mid X(t-p_l:t-1), B, \{A_p\}) =\\
      & |\!\det (I-B) |^{p_l} \!\cdot\! P_{E}[(I-B)X(t)\!-\!\sum_p \{A_p\} X(t-p_l\!:\!t-1)], \\
    & p(X(t) \mid X(1:t-1), B, \{A_p\}) = \\
     & |\det (I-B) |^{t} \cdot P_{E}[(I-B)X(t)\!-\!\sum_p \{A_p\} X(1\!:\!t-1)], \\
    & p(X(1)) = p(E),
    \label{eq:px_con}
    \end{aligned}
\end{equation}
where 
\begin{equation}
    \begin{aligned}
    & P_{E}[(I-B)X(t)-\sum_p \{A_p\} X(t-p_l:t-1)] = \\
    &\sum_{k'=1}^{q'} \! \pi_{k'}^{E} \mathcal{N}((I-B)X(t)\!-\!\sum_p \! \{A_p\} X(t-p_l \!:\! t-1) \!\mid \! \mu_{k'}^E,\Sigma_{k'}^E),    
    \end{aligned}
\end{equation}
and 
\begin{equation}
    \begin{aligned}
    & P_{E}[(I-B)X(t)-\sum_p \{A_p\} X(1:t-1)] =\\
    & \sum_{k'=1}^{q'} \pi_{k'}^{E} \mathcal{N}((I-B)X(t)-\sum_p \{A_p\} X(1:t-1) \mid \mu_{k'}^E,\Sigma_{k'}^E).
    \end{aligned}
\end{equation}

Similarly, for an individual subject, $P(X^s \mid c_s = k)$ is calculated as:
\begin{equation}
    \begin{aligned}
    & P(\mathbf{X}^{s} \mid c_s \!= \!k)= \\
    &\! \int\!\!\!\int\! P(\mathbf{X}^{s} \!\mid\! {A_p^{(k)}}\!,B^{(k)}\!, c_s \!=\! k)P({A_p^{(k)}}\!,B^{(k)} \!\!\mid\! c_s\! = \! k) d{A_p^{(k)}}d{B^{(k)}}.
    \end{aligned}
\end{equation}

The above integration does not have a closed form, and thus we use Monte Carlo integration. We sample $M$ values of ${A_p^{(k)}}$ and $B^{(k)}$ from $P({A_p^{(k)}},B^{(k)} \mid c_s = k)$. Then, according to Eq. (\ref{eq:p_x_marginal}), we can calculate $P(X^s \mid c_s = k)$ as follows.
\begin{small}
\begin{equation}
\begin{aligned}
    &P(\mathbf{X}^{s} \mid c_s = k) =\\
    &\frac{1}{M}\sum_{i=1}^{M}\mid \det (I-B^{(k,i)})\mid^{T_s}\\
   & \cdot \!\! \sum_{k'=1}^{q'} \!\!\pi_{k'}^{E} \!\!  \left \{\!\! \left[\prod_{t=p_l+1}^{T} \!\! \mathcal{N}(\! (I\!\!-\!\!B^{(k,i)}) \mathbf{X}_{t}^{s} \!\!-\!\!\sum_{p}\!\! A_p^{(k,i)} \!\! X_{t-p_l:t-1}^{s};\Vec{\mu}_{k'}^{E^{(k)}} \!\!,\!\Sigma_{k'}^{E^{(k)}} \!) \! \right]   \right.\\
  & \left. \cdot \left [ \prod_{t=2}^{p_l} \mathcal{N}((I-B^{(k,i)}) \mathbf{X}_{t}^{s} -\sum_{p}A_p^{(k,i)}X_{1:t-1}^{s};\Vec{\mu}_{k'}^{E^{(k)}},\Sigma_{k'}^{E^{(k)}}) \right] \right.\\
   & \left. \cdot  \mathcal{N}((I-B^{(k,i)}) \mathbf{X}_{1}^{s};\Vec{\mu}_{k'}^{E^{(k)}},\Sigma_{k'}^{E^{(k)}}) \right\},  
\end{aligned}
\label{eq:x_sMidc_s}
\end{equation}
\end{small}
where $A_p^{(k,i)}$ and $B^{(k,i)}$ denote the sampled $i$-th value from $P({A_p^{(k)}},B^{(k)} \mid c_s = k)$. Therefore,
\begin{equation}
    \begin{aligned}
    &P(c_s = k \mid \mathbf{X}^{s})\propto \\
    & \frac{\pi_{k}}{M} \! \sum_{k=1}^{M} \!\mid \! \det (I-B^{(k,i)}) \!\mid^{T_s} \!\cdot \! \sum_{k'=1}^{q'} \pi_{k'}\mathcal{N} \! \left((I-B^{(k,i)})) \mathbf{X}_{p+1:T_{s}}^{s} \right. \\
    &\left. -\sum_{p}A_p^{(k,i)}X_{1:T_s-p}^{s}\mid \Vec{\mu}_{k'}^{E^{(k)}},\Sigma_{k'}^{E^{(k)}} \right).
    \end{aligned}
\end{equation}

\subsection{Learning Algorithm}

Based on the objective function, our proposed method can be summarized as follows. Give the multiple group of observed data $\mathbf{X}$, CCSL method begins with a empty cluster (or table) with a scaling parameter $\alpha$. Then for the first considered sample $\mathbf{X}^i$, it is assigned into the first table with the probability $\alpha$, and the causal structure for that cluster is obtained. Based on this cluster, the cluster that following considered sample is assigned depend on the likelihood of the similarity on causal structures. If one sample is assigned into a existing cluster $j$, CCSL method updates the causal structure for cluster $j$. This process is repeated until convergence. 

All in all, during cluster samples by causal CRP, the causal structures for each group are estimated by using the variational inference method. The above two steps are nested together for clusters updating and causal structures updating until convergence. At last, the number of groups and causal structure for each group are determined. The pseudo-code for CCSL can be found in Algorithm \ref{alg:framework}.


\begin{algorithm}[ht]
	\caption{Causal Clustering Structure Learning (CCSL)}
	\label{alg:framework}
	\begin{algorithmic}[1]        
		\Require Observed data $\mathbf{X}$
		\Ensure Number of clusters $q$,  the cluster assignment of customers $\{c_s, s=1,2,\dots, n\}$, Causal structures $\{\mathcal{G}^j, j=1,2,\dots, q\}$ over observed variables       
		\State Initialize parameters $\Theta$ randomly;
		\State $q \gets n$;
		\For{all $\mathbf{X}^s \in \mathbf{X} $}
		\State  $k \gets s$;
		\State $c_s \gets k$;
		\State  append $\mathbf{X}^s$ into $\mathbf{X}^{(k)}$;
		\State   $\mid k\mid =1$; \Comment{ $\mid k \mid$ is the number of subjects in the $k$-th group} 
		\EndFor
		\Repeat
		\For{all $\mathbf{X}^s \in \mathbf{X} $}
		\For{all $i \in \{1,\dots,q\}$ }
		\State calculate $P(c_s = i \mid \mathbf{X}^s)$ according to Eq. (\ref{eq:causalCRP});
		\EndFor 
	\State $k \gets \mathop{\arg\max}\limits_{k} P(c_s = k \mid \mathbf{X}^s), k \in \{1,\dots,q\}$;
	\State $\mid k\mid \gets \mid k\mid -1 $
	\State $k \gets s$;
	\State append $\mathbf{X}^s$ into $\mathbf{X}^{(k)}$;
	\State $\mid k\mid \gets \mid k \mid +1 $
		
		\If{$\mid X^{(c_s)}\mid =0$}
		\State $q \gets q - 1$;
		\EndIf
		\State $\mathcal{G}^k \gets  \mathop{\arg\max}\limits_{ \Theta} \mathcal{L}(q_{\Theta})$ according to Eq. (\ref{eq:L_q}). \Comment{update $\Theta$ using gradient descent method with Adam optimizer\cite{kingma2015adam} }
		\EndFor
		\Until{convergence}
	\end{algorithmic}
\end{algorithm}

\section{Identifiablity}
In this section, first, we show that the functional causal model for each subject or group is also identifiable under mild assumptions. Second, we demonstrate that for clustering, the correct causal structure yields the highest likelihood.



\begin{theorem}
Given observation $\mathbf{X}^{(k)}$ for $k$-th group that is generated according to Eq. (\ref{eq:model_group}) where $k=1,2,\dots$, $\mathbf{B}^{(k)}$ and $\mathbf{A}_p^{(k)}$ are identifiable as the sample size $T \to \infty$.
\end{theorem}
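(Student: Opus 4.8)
The plan is to recognize that the group-specific model in Eq.~(\ref{eq:model_group}) is exactly a structural vector autoregression of the linear non-Gaussian type (as in \cite{hyvarinen2010estimation}), and to recover its parameters in two stages: first identify the reduced-form (classical) VAR transition matrices from the second-order time-series structure of the process, and then peel off the instantaneous matrix $B^{(k)}$ from the reduced-form innovations by an independent-component-analysis argument in the spirit of LiNGAM \cite{shimizu2006linear}.

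First I would pass to the reduced form. Since the instantaneous causal graph encoded by $B^{(k)}$ is acyclic, there is a permutation of the variables after which $B^{(k)}$ is strictly lower triangular; in particular $I-B^{(k)}$ is invertible. Left-multiplying Eq.~(\ref{eq:model_group}) by $(I-B^{(k)})^{-1}$ gives
\begin{equation}
X^{(k)}(t) = \sum_{p=1}^{p_l} M_p^{(k)} X^{(k)}(t-p) + \widetilde{E}^{(k)}(t),
\end{equation}
with $M_p^{(k)} = (I-B^{(k)})^{-1}A_p^{(k)}$ and $\widetilde{E}^{(k)}(t)=(I-B^{(k)})^{-1}E^{(k)}(t)$. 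Assuming the process is stationary (the companion matrix built from $\{M_p^{(k)}\}$ has spectral radius below one) and the innovations have a non-degenerate finite covariance, the matrices $M_p^{(k)}$ are the coefficients of the best linear predictor of $X^{(k)}(t)$ from its $p_l$ most recent past values; they are therefore determined by the autocovariance function, and the ordinary-least-squares (equivalently Yule--Walker) estimator is consistent, so as $T\to\infty$ both $M_p^{(k)}$ and the reduced-form residual distribution of $\widetilde{E}^{(k)}(t)$ are recovered.

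Second, by the independence assumptions on $E^{(k)}$ in Eq.~(\ref{eq:model_group}) --- mutually independent components, at most one of which is Gaussian --- the relation $\widetilde{E}^{(k)} = (I-B^{(k)})^{-1}E^{(k)}$ is a noiseless linear ICA model, so the mixing matrix $(I-B^{(k)})^{-1}$ is identified from the distribution of $\widetilde{E}^{(k)}$ up to permutation and rescaling of its columns (Darmois--Skitovich / Comon). To fix these ambiguities I would invoke the LiNGAM argument: among all column permutations of the estimated mixing matrix there is exactly one whose inverse can be brought to a form with an all-nonzero diagonal and a strictly triangular off-diagonal part (this is where acyclicity of the instantaneous graph is essential), and the remaining scaling is pinned down by requiring $I-B^{(k)}$ to have unit diagonal. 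This yields $B^{(k)}$ uniquely, and then $A_p^{(k)} = (I-B^{(k)})M_p^{(k)}$ is identified as well.

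The main obstacle, and the place where the assumptions really do the work, is the ICA step: it requires the driving noise $E^{(k)}$ to be genuinely non-Gaussian in every coordinate (so the mixture-of-Gaussians used in the practical solution must represent a non-Gaussian law, not a single Gaussian), and it requires the instantaneous graph to be acyclic so that the permutation and scaling can be resolved --- without acyclicity only the equivalence class of the mixing matrix is recoverable. A secondary technical point is the VAR stage: stationarity together with a non-degenerate innovation covariance is needed for the reduced-form coefficients, and hence the residual distribution, to be consistently estimable as $T\to\infty$; I would state these as explicit regularity conditions accompanying the theorem.
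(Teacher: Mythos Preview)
Your proposal is correct and is, at bottom, the same route the paper takes: both reduce the question to the identifiability of a linear non-Gaussian SVAR in the sense of \cite{hyvarinen2010estimation}, with LiNGAM \cite{shimizu2006linear} handling the instantaneous part. The difference is one of depth rather than strategy. The paper's own proof is essentially a proof by citation --- it partitions the model into three cases (instantaneous only, lagged only, both) and for each case points to an existing identifiability result (LiNGAM, Granger, and the SVAR--LiNGAM of \cite{hyvarinen2010estimation} respectively). You have instead unpacked the mechanism behind the third (and most general) case: first recover the reduced-form VAR matrices $M_p^{(k)}$ from the second-order dynamics, then apply an ICA/LiNGAM argument to the reduced-form innovations to pin down $B^{(k)}$, and finally back out $A_p^{(k)}=(I-B^{(k)})M_p^{(k)}$. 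This is precisely the two-stage construction underlying \cite{hyvarinen2010estimation}, so your argument is not a different proof so much as a spelled-out version of what the paper cites. What your version buys is an explicit inventory of the regularity conditions actually doing the work --- acyclicity of the instantaneous graph, genuine non-Gaussianity of every noise coordinate (so that the Gaussian mixture in Eq.~(\ref{eq:prior_e}) must not collapse to a single Gaussian), and stationarity/non-degeneracy for the VAR stage --- none of which the paper's proof states explicitly.
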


\begin{proof}
We consider the following three cases where the identifiablity holds.
\begin{itemize}
    \item If the model does not contain the lagged effect, Eq. (\ref{eq:model_group}) actually is LiNGAM (Linear Non-Gaussian Acyclic Model), which is proved to be identified \cite{shimizu2006linear}. 

    \item If the model only contains the lagged effect, Eq. (\ref{eq:model_group}) is the generalization of Granger Causality \cite{granger1969investigating} in multiple variables.

    \item If the model contains both instantaneous effect and lagged effect, Eq. (\ref{eq:model_group}) becomes a structural vector autoregression (SVAR) model with non-Gaussian instantaneous effect \cite{hyvarinen2010estimation}.
    \begin{equation}
        \begin{aligned}
            X(t)= BX(t)+\sum_{p=1}^{p_l}A_pX(t-p)+E(t),
        \end{aligned}
    \end{equation}   
    which is also proved to be identified.
\end{itemize}

Thus, $\mathbf{B}^{(k)}$ and $\mathbf{A}_p^{(k)}$ in the causal model defined as Eq. (\ref{eq:model_group}) for each group is identifiable. 
\end{proof}

Note that when a group only has one subject, the data of the subject generated by Eq. (\ref{eq:model_i}) is the same as the data that is generated according to Eq. (\ref{eq:model_group}). So the causal structure for each subject is also identifiable. Based on the identified causal structures, the clustering results is also correct, which is guaranteed by the following theorem.
\begin{theorem}
Let sample size $ T_s \to \infty$. Given observed data $\mathbf{X}$ which is generated from $q$ different causal clustering structure model as Eq. (\ref{eq:model_i}), for each group data $\mathbf{X^s}$, $k$ is consistent with the ground-truth, when $P(c_s=k \mid \mathbf{X}^s)$ is asymptotically higher than the probability of subject $s$ assigned to the another cluster conditional on $\mathbf{X}^s$.
\label{the:cluster}
\end{theorem}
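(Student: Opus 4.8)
The plan is to verify the condition in the statement directly: I will show that, almost surely as $T_s\to\infty$, the ground--truth assignment $k^\star$ of subject $s$ attains a strictly larger posterior $P(c_s=k\mid\mathbf{X}^s)$ than any other cluster, so that the $\arg\max$ taken in Algorithm~\ref{alg:framework} returns $k^\star$. Write $\theta_{k^\star}=(B^{(k^\star)},\{A_p^{(k^\star)}\},P_{E^{(k^\star)}})$ for the parameters generating $\mathbf{X}^s$ through Eq.~(\ref{eq:model_group}), inducing a stationary law $P_{k^\star}$ on the trajectory, and let $P_{k'}$ be the trajectory law induced by the parameters $\theta_{k'}$ of any competing cluster $k'\neq k^\star$. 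The proof rests on two facts: (i) the log--likelihood ratio between $k^\star$ and $k'$ diverges linearly in $T_s$; (ii) the CausalCRP prior ratio stays bounded. Together they force the posterior log--ratio to $+\infty$.

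For (i), I would start from the chain--rule factorization Eq.~(\ref{eq:p_x_marginal}), so that $\log P(\mathbf{X}^s\mid c_s=k)=\sum_{t=p_l+1}^{T_s}\log p(X(t)\mid X(t-p_l:t-1),\theta_k)+O(1)$. Under the usual stability condition on $B^{(k^\star)},\{A_p^{(k^\star)}\}$ the SVAR process of Eq.~(\ref{eq:model_group}) is stationary and ergodic, so the ergodic theorem gives
\[
\frac{1}{T_s}\log P(\mathbf{X}^s\mid c_s=k)\ \longrightarrow\ E_{P_{k^\star}}\!\left[\log p\!\left(X(t)\mid X(t-p_l:t-1),\theta_k\right)\right]\quad\text{a.s.},
\]
and by Gibbs' inequality the right--hand side is maximised over $k$ at $k=k^\star$, with
\[
\frac{1}{T_s}\left(\log P(\mathbf{X}^s\mid c_s=k^\star)-\log P(\mathbf{X}^s\mid c_s=k')\right)\ \longrightarrow\ D(P_{k^\star}\,\|\,P_{k'})\ \ge\ 0 .
\]
The decisive point is that this divergence rate is \emph{strictly} positive. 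Here Theorem~1 (identifiability of $B^{(k)},\{A_p^{(k)}\}$ under the linear non--Gaussian model) is used: $D(P_{k^\star}\|P_{k'})=0$ would force all the per--step conditional densities --- hence the innovation distributions and the transfer matrices $(I-B)$, $\{A_p\}$ --- to coincide, and identifiability then implies $\theta_{k'}$ and $\theta_{k^\star}$ encode the same causal structure, contradicting the hypothesis that $k'$ is a genuinely distinct cluster. Hence $D(P_{k^\star}\|P_{k'})>0$ and $\log P(\mathbf{X}^s\mid c_s=k^\star)-\log P(\mathbf{X}^s\mid c_s=k')=T_s\,D(P_{k^\star}\|P_{k'})+o(T_s)\to+\infty$.

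For (ii), the prior $P(c_s=k)$ comes from Eq.~(\ref{eq:causalCRP})--(\ref{eq:c_s=k}) and is proportional to the table occupancy $n_k$ (or to $\alpha$ for a fresh table); each of these quantities lies in a fixed range independent of $T_s$, so $|\log P(c_s=k^\star)-\log P(c_s=k')|=O(1)$. Adding this bounded term to the diverging log--likelihood ratio yields $\log P(c_s=k^\star\mid\mathbf{X}^s)-\log P(c_s=k'\mid\mathbf{X}^s)\to+\infty$ for every $k'\neq k^\star$; since there are only finitely many clusters, for all $T_s$ large enough $k^\star=\arg\max_k P(c_s=k\mid\mathbf{X}^s)$ almost surely, which is exactly the claim, and by the note preceding the theorem the same argument covers a singleton group where Eq.~(\ref{eq:model_i}) coincides with Eq.~(\ref{eq:model_group}).

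The step I expect to be the main obstacle is that the quantity actually computed, $P(\mathbf{X}^s\mid c_s=k)$ in Eq.~(\ref{eq:x_sMidc_s}), is not the likelihood at a single $\theta_k$ but a Monte--Carlo average over the fitted posterior $P(\{A_p^{(k)}\},B^{(k)}\mid c_s=k)$ on the cluster parameters. To let the ergodic computation above go through, one must show this posterior contracts --- a Bernstein--von Mises / posterior--consistency statement --- so that for $k^\star$ the integral is asymptotically dominated by a shrinking neighbourhood of $\theta_{k^\star}$ and attains the per--step limit written above, while for $k'$ concentration can only enlarge the gap. Making this rigorous requires controlling the prior mass around the truth, the mean--field approximation error, and the uniformity of the law of large numbers over a shrinking parameter neighbourhood; once that concentration is in hand the remainder is the standard ``a positive KL rate beats a bounded prior ratio'' consistency argument.
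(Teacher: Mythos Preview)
Your argument follows essentially the same skeleton as the paper's: decompose the log--posterior ratio via Bayes' rule into a prior term and a log--likelihood term, and identify the latter (asymptotically, per sample) with a KL divergence between the true data law and the law under a competing cluster. The paper packages this as a short proof by contradiction: assuming some $z\neq k$ has higher posterior, it writes the log--likelihood ratio as $-M\cdot KL(\cdot\|\cdot)$ and concludes that the prior ratio would have to satisfy $P(c_s=z)>P(c_s=k)$, which it then declares impossible because ``$X^s$ belongs to group $k$''. Your version is a direct argument and is strictly more careful on several points the paper glosses over: you justify the per--sample limit via an ergodic theorem for the stationary SVAR, you invoke Theorem~1 explicitly to get the \emph{strict} inequality $D(P_{k^\star}\|P_{k'})>0$ (the paper only uses $KL\ge 0$, which would not by itself give a diverging log--likelihood gap), and you treat the prior term as merely $O(1)$ rather than relying on the paper's somewhat circular claim that the prior already favours the true cluster. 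You also flag the Monte--Carlo / posterior--integration issue in Eq.~(\ref{eq:x_sMidc_s}), which the paper does not address at all. In short, the route is the same, but your execution supplies the missing justifications.
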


\begin{proof}
(Proof by Contradiction.) Suppose there was a group index $z$ satisfies: 1) $z\neq k$, 2)$P(c_s=z \mid \mathbf{X}^s)>P(c_s=k \mid \mathbf{X}^s)$. Then, according to Eq. (\ref{eq:c_s=k}), we have
\begin{equation}
    \begin{aligned}
    &  \frac{P(c_s=z \mid \mathbf{X}^s)}{P(c_s=k \mid \mathbf{X}^s)}\\
    = &   \frac{P(\mathbf{X}^s \mid c_s=z)P(c_s =z)}{P( \mathbf{X}^s\mid c_s=k)P(c_s=k) }\\
    = &   \frac{P(c_s=z)}{P(c_s=k)} \cdot  \frac{P(\mathbf{X}^s \mid c_s=z)}{P( \mathbf{X}^s\mid c_s=k)}.
    \end{aligned}
\end{equation}

Let $M$ denote the sample size, and $n$ denote the number of observed variables. Using the log function, we obtain
\begin{equation}
    \begin{aligned}
    & \log \frac{P(c_s=z \mid \mathbf{X}^s)}{P(c_s=k \mid \mathbf{X}^s)} =  \log  \frac{P(c_s=z)}{P(c_s=k)} + \log \frac{P(\mathbf{X}^s \mid c_s=z)}{P( \mathbf{X}^s\mid c_s=k)},
    \end{aligned}
\end{equation}
where
\begin{equation}
    \begin{aligned}
    & \log \frac{P(\mathbf{X}^s \mid c_s=z)}{P( \mathbf{X}^s\mid c_s=k)}\\
    = &  \sum_{i=1}^M  \sum_{j=1}^m  \log \frac{ P(x_{j}^s= \mathbf{X}_{i,j}^s \mid par(j,z))= \mathbf{X}_{i,par(j,z)}^s)}{ P(E_{i,j} = \mathbf{X}_{i,j}^s - f(\mathbf{X}_{i,par(j,k)}^s)}\\
    = &  \sum_{i}^M  \log \frac{ P(x^s= \mathbf{X}_{i}^s)}{ \prod_{j=1}^m P(E_{i,j} = \mathbf{X}_{i,j}^s - f(\mathbf{X}_{i,par(j,k)}^s)} \\
    = & \! -\!  M \! \cdot \! \mathbb{E}_{X^s} \log \frac{ P(X^s=\mathbf{X}^s)}{ \prod_{j=1}^m P(E_j = \mathbf{X}_{j}^s - f(\mathbf{X}_{j,par(j,k)}^s)}\\
    = & \! - \! M \!\! \cdot \! KL \!\! \left( \!  p(X^s \!=\!\mathbf{X}^s) \| \prod_{j=1}^m \! P(E_j \!= \!\mathbf{X}_{j}^s \! -\!  f(\mathbf{X}_{j,par(j,k)}^s) \!  \right)\!\!,
    \end{aligned} 
    \label{eq:compare}
\end{equation}
where $par(j,z)$ denotes the parent of $x_j$ with respect to $\mathcal{G}^{(z)}$.

According to the second condition $P(c_s=z \mid \mathbf{X}^s) > P(c_s=k \mid \mathbf{X}^s)$ and Eq. (\ref{eq:compare}), there should be $ \log \frac{P(c_s=z)}{P(c_s=k)} > KL(p(X^s) \| P(E_i = X_i^s - f(B^{(z)}, A_p^{(z)})))$. Then $\log \frac{P(c_s=z)}{P(c_s=k)} > 0$ due to the KL divergent is larger than 0. That is, $P(c_s=z)>P(c_s=k)$. 

But in fact, $X^s$ belongs to group $k$, i.e, $P(c_s=z)<P(c_s=k)$. So $P(c_s=z)>P(c_s=k)$ contradicts the supposition. 

Since the supposition is false, it states that the original statement is true.
\end{proof}

Note that Theorem \ref{the:cluster} is based on the assumption that the subjects belong to the same cluster imply the same causal mechanism. Under the identified causal structure estimation, we can always cluster the subject into the correct group, and obtain the correct causal structure.  

\section{Experiments}
To evaluate the correctness and effectiveness of our method, we conduct extension experiments on both synthetic data and real-world data. The source code of the proposed method is publicly available online \footnote{{https://github.com/DMIRLAB-Group/CCSL}}.

\subsection{Synthetic Data}
We randomly generated directed acyclic causal structures according to the Erdos-Renyi model \cite{erdds1959random} with parameter 0.3. To show the generality of the proposed method, we varied the number of variables for each causal graph with $\{6,8,10,12,14\}$, the sample sizes for each subject with $l_s = \{40, 60,80,100,120\}$, the number of groups with $q = \{2,3,4,5,6\}$ and the number of subjects with $n = \{20, 30,40,50,60 \}$. Motivated from the real-world scenario that brain connectives may be enhanced or inhibited in subjects with mental disorders, such as autism and schizophrenia, compared to typical controls, the parameters were set up as \cite{huang2019specific} in the following way:


$\mu_{k,ij} \sim \mathcal{U}(0.1, 0.4)$, $\nu_{k, i j, p}  \sim \mathcal{U}(0.1, 0.4)$, $\sigma_{k,i,j}^2 \sim \mathcal{U}(0.01,0.1)$, $\omega_{k, i j, p}^2 \sim \mathcal{U}(0.01,0.1)$, each $\vec{\mu}_{ k'}^{E^{(k)}} \sim \mathcal{U}(-0.6,-0.4) \bigcup \mathcal{U}(0.4,0.6)$, each $\Sigma_{ k'}^{E^{(k)}} \sim \mathcal{U}(0.2,0.5)$, $\pi_{ k'}^{E^{(k)}} \sim \mathcal{U}(0.3,0.6)$, and $\sum_{k=1}^{q} \pi_{k} =1$, $\sum_{k'=1}^{q'} \pi_{k'}^{E^{(k)}} =1$, where $\mathcal{U}(a,b)$ denotes a uniform distribution between $a$ and $b$. For each setting (a particular group size $q$, a particular sample size for each subject $T_s$, and the number of subject $n$), we generated 10 realizations.

For clustering, we evaluate the performance of the proposed CCSL, and compare with the clustering results of state-of-the-art clustering methods, which are as follows:
\begin{itemize}
    \item Kmeans \cite{macqueen1967some}: It is a well-known partition clustering algorithm, which assigns different subjects into $k$ clusters based on the criterion that each subject is closest to the center of its category. 
    \item DBSCAN \cite{Ester1996dbscan}: It is a density-based clustering method, assuming that the clustering category can be determined by the tightness of the sample distribution. The samples of the same category are closely connected between them, i.e., the samples of the same category are not far from any sample of this category.
    \item OPTICS \cite{ankerst1999optics}: It is a revision of DBSCAN, which addresses the disadvantage of DBSCAN on detecting meaningful clusters in data of varying density.
\end{itemize}

We use DTW and Euclidean distance as their measure criterion, so the baseline clustering methods are KMeans (DTW), KMeans (Euclidean), DBSCAN (DTW), and OPTICS (DTW). The brackets with each method indicate the chose distance measurement methods, e.g. KMeans (DTW) means the KMeans method with DTW measure. For the implementation of the baseline methods, we use the public codes from scikit-learn\footnote{\url{https://scikit-learn.org/stable/modules/clustering.html}}.

For causal discovery, we identify the causal structures of different groups estimated by our method. We compare it with the following baseline methods:
\begin{itemize}
    \item VAR-LiNGAM \cite{hyvarinen2010estimation}: It is a function-based method to estimate a structural vector autoregression model. Under the non-Gaussian assumption, it can identify the instantaneous and time-lagged linear effect between variables.
    \item PCMCI \cite{runge2018causal} \cite{runge2019detecting}: This method is a constraint-based method and assume no instantaneous effect between variables. It contains two stages. The first stage is based on the PC algorithm and recovers some potential causal relationships for each variable, and the second stage is using the Momentary Conditional Independence (MCI) test to remove some redundant edges.  
    \item DYNOTEARS \cite{pamfil2020dynotears}: It is a score-based method for learning dynamic Bayesian networks. With the acyclicity constraints and the SVAR model, it can learn instantaneous effect and time-lag influence between variables simultaneously. 
\end{itemize}

Though these methods assume the data are homogeneous and the causal model is fixed, we apply these methods on data of each subject separately to estimate the adjacent matrix, and then use the KMeans method to cluster the estimated adjacent matrices into different groups. 

In our method, we initialized the parameters randomly. In our experiments, the number of groups can be unknown. Let $\hat{\mathcal{G}}^k$ denote the estimated shared causal graph for the $k$-th group. It was determined as follows: $\hat{\mathcal{G}}^k_{i,j}=1$ if $|\hat{b}^k_{i,j}|>1$ or $|\hat{a}^k_{i,j,p}|>0.1$ or both, and $\hat{\mathcal{G}}^k_{i,j}=0$ if otherwise. Alternatively, one may use Wald test to examine significance of edges, as in \cite{shimizu2006linear}.

\textbf{Evaluation Metrices.} We evaluate the performance of our method in the terms of clustering and causal structure learning.
For clustering, we use the Adjusted Rand Index (ARI \cite{rand1971objective}) to measure the correction of the estimated groups. It measures the similarity between the estimated groups and the ground truth (the higher, the more accurate).
For causal structure learning, we use Area Under Curve (AUC) to measure the accuracy of learned causal graphs.



\begin{figure}[t]
\centering
\subfigure[Different number of variables]{
  \includegraphics[width=0.45\linewidth]{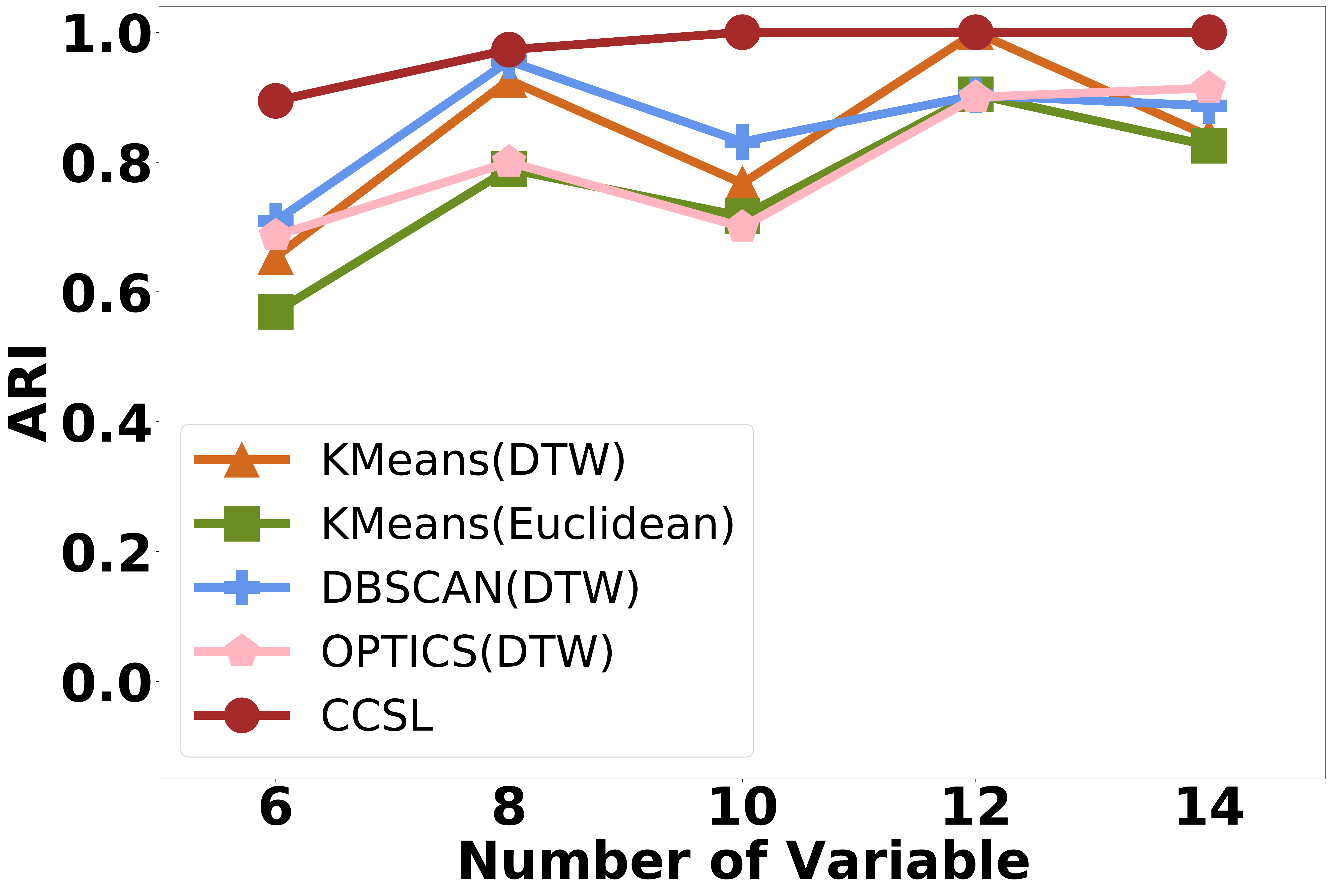}
  \label{fig:ARI_synthetic_variable}
}
\subfigure[Different sample sizes]{
    \includegraphics[width=0.45\linewidth]{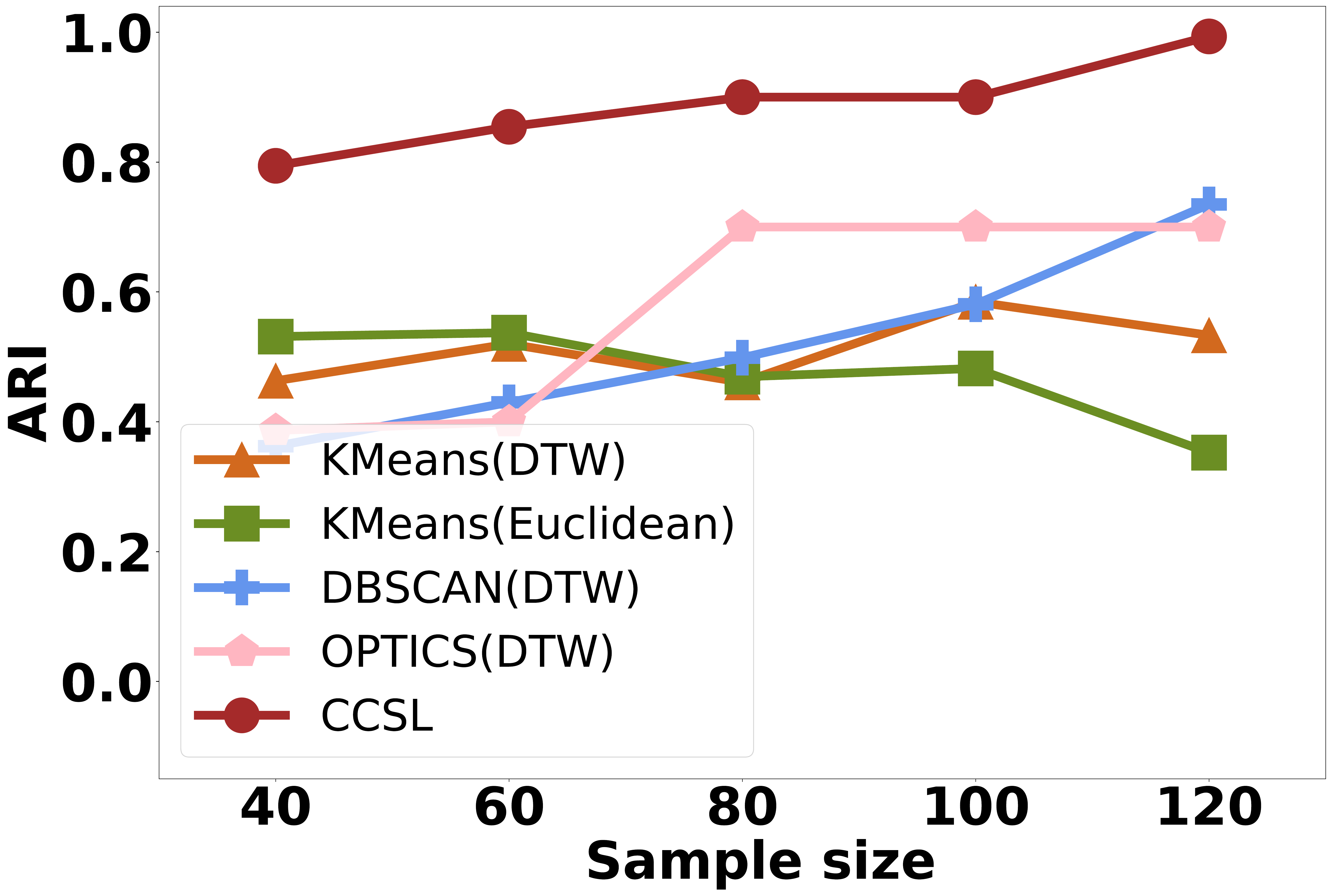}  
    \label{fig:ARI_synthetic_sample}
}
\subfigure[Different number of groups]{
    \includegraphics[width=0.45\linewidth]{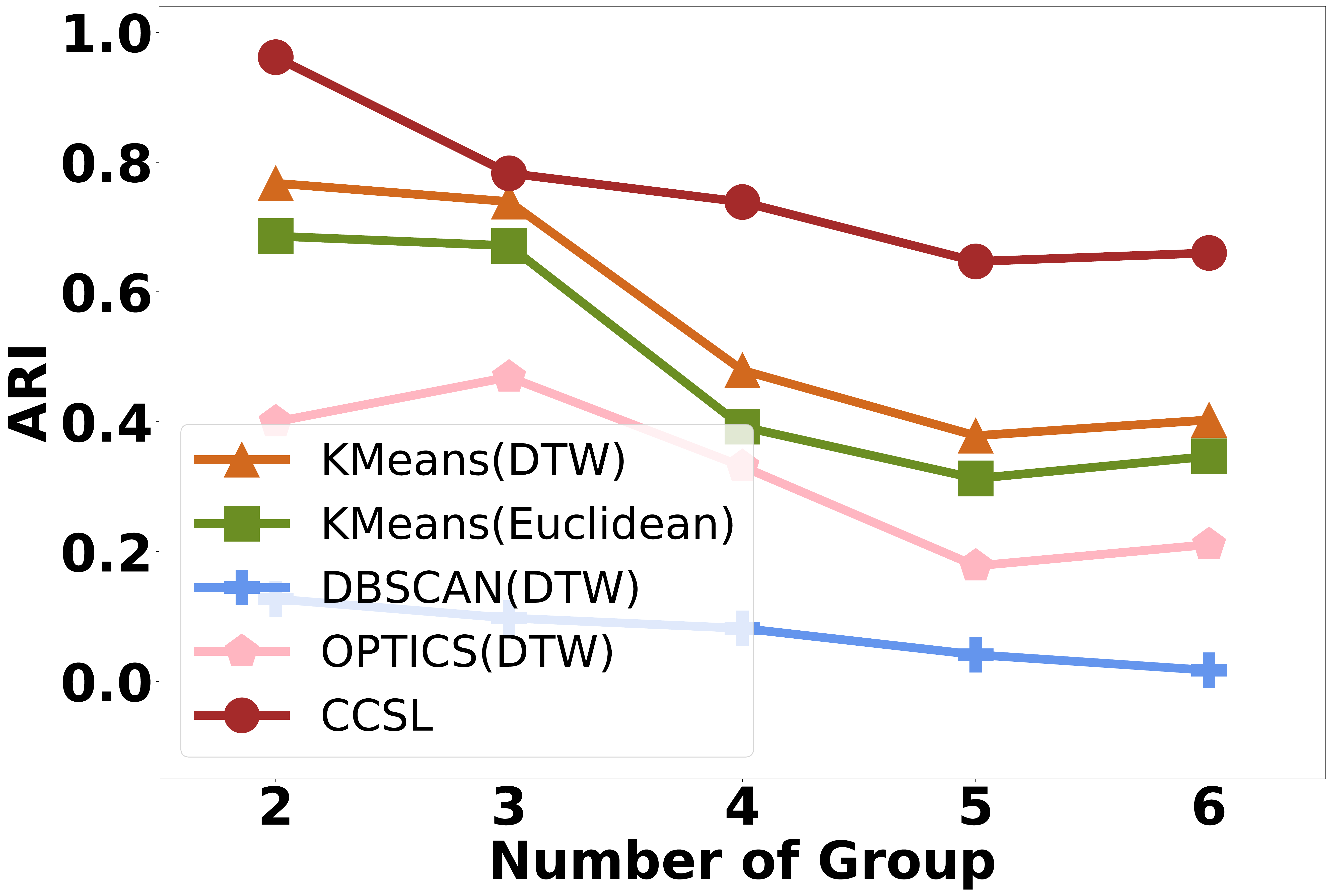}  
    \label{fig:ARI_synthetic_group}
}
\subfigure[Different number of subjects]{
  \includegraphics[width=0.45\linewidth]{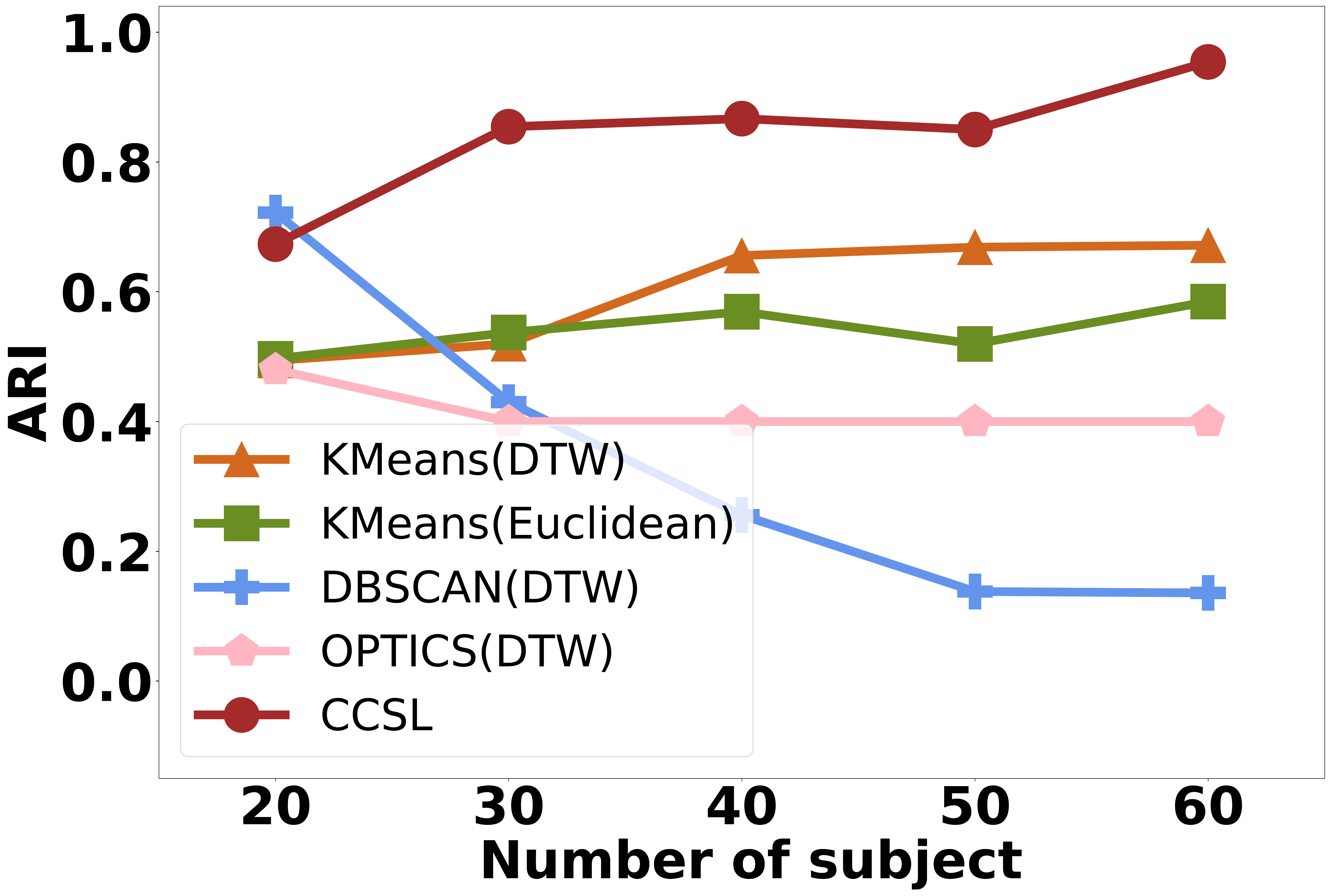}  
  \label{fig:ARI_synthetic_subject}
}
\caption{ARI under different settings compared with the baseline methods}
\label{fig.ARI_synthetic}
\end{figure}

\paragraph{\textbf{Sensitivity on clustering}}

The results of different clustering methods and our method are shown in Figure \ref{fig.ARI_synthetic}. Our method performs best in most cases, except where the number of subjects is $20$ in Figure \ref{fig.ARI_synthetic}(d). This is because our method considering the causal relationships among variables for different subjects, which is closer to the true mechanism. 
Figure \ref{fig.ARI_synthetic}(a) illustrates the ARI under the different number of variables when the number of subjects is $30$, the number of groups is $2$, the sample size is $60$. The ARIs of baselines are mostly between $0.6$ and $0.8$. From the results, one can see that our proposed method works well in the different number of variables. This verifies that more variables contain more causal information, which helps for clustering. The other methods change similarly due to using the same measure features for clustering. Figure \ref{fig.ARI_synthetic}(b) gives the results when sample size changes, CCSL obtains a much higher ARI (about 0.2) than others. The ARI of KMeans with DTW or Euclidean distance as a measured way decreases when the sample size increases. This reflects that some samples could affect the calculation of distance. Figure \ref{fig.ARI_synthetic}(c) shows that the ARI of all methods decreases When the number of groups increases. But the ARI of our method decreases much slower than that of others. All results show that our method performs well when there is a larger sample size, more subjects. Because this case can provide more information to learn the causal structures more correct, and the estimated structures help to cluster more correct.

\paragraph{\textbf{Sensitivity on causal structures learning}}

\begin{figure}[t]
\centering
\subfigure[Different number of variables]{
  \includegraphics[width=0.45\linewidth]{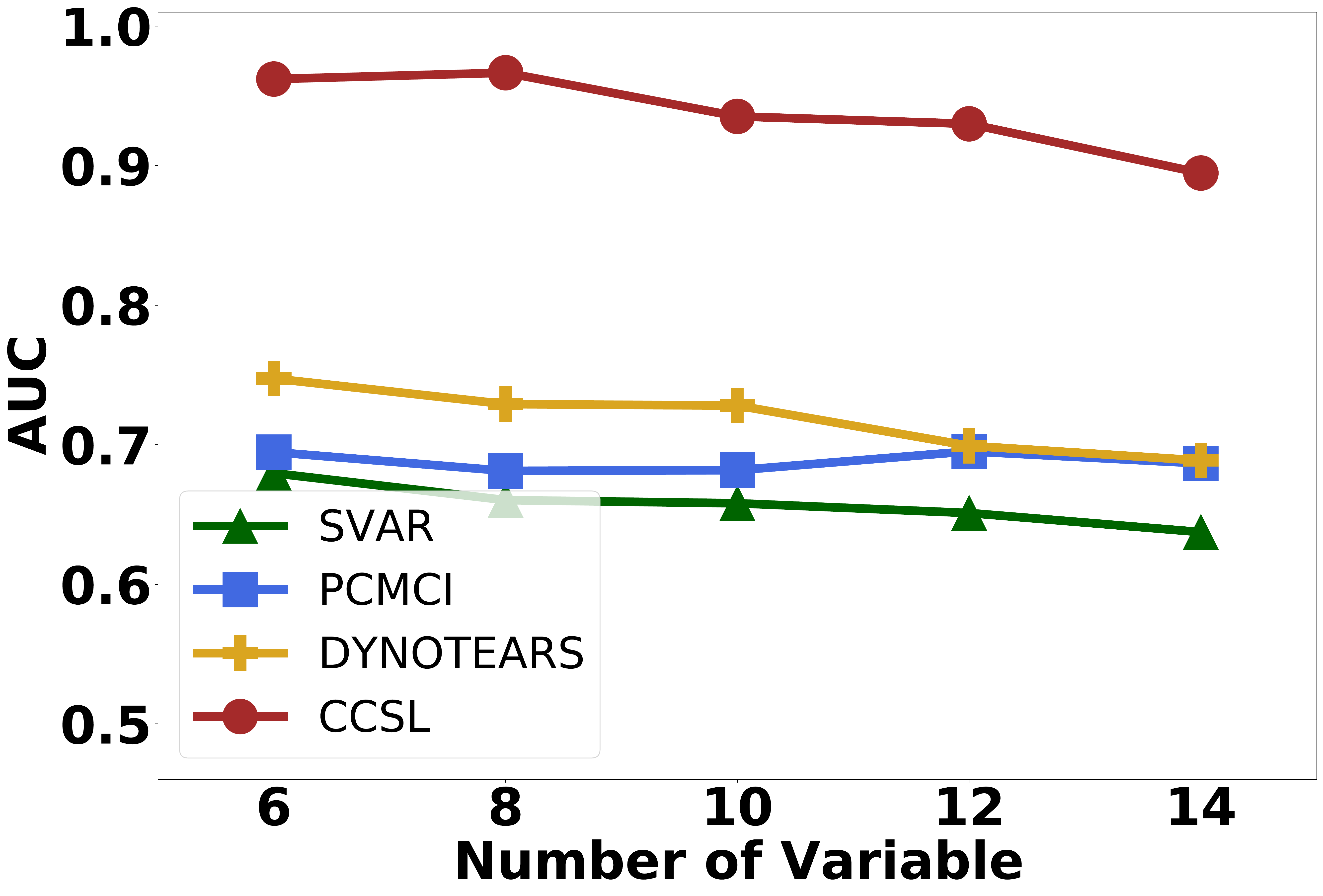}
  \label{fig:AUC_synthetic_variable}
}
\subfigure[Different sample sizes]{
  \includegraphics[width=0.45\linewidth]{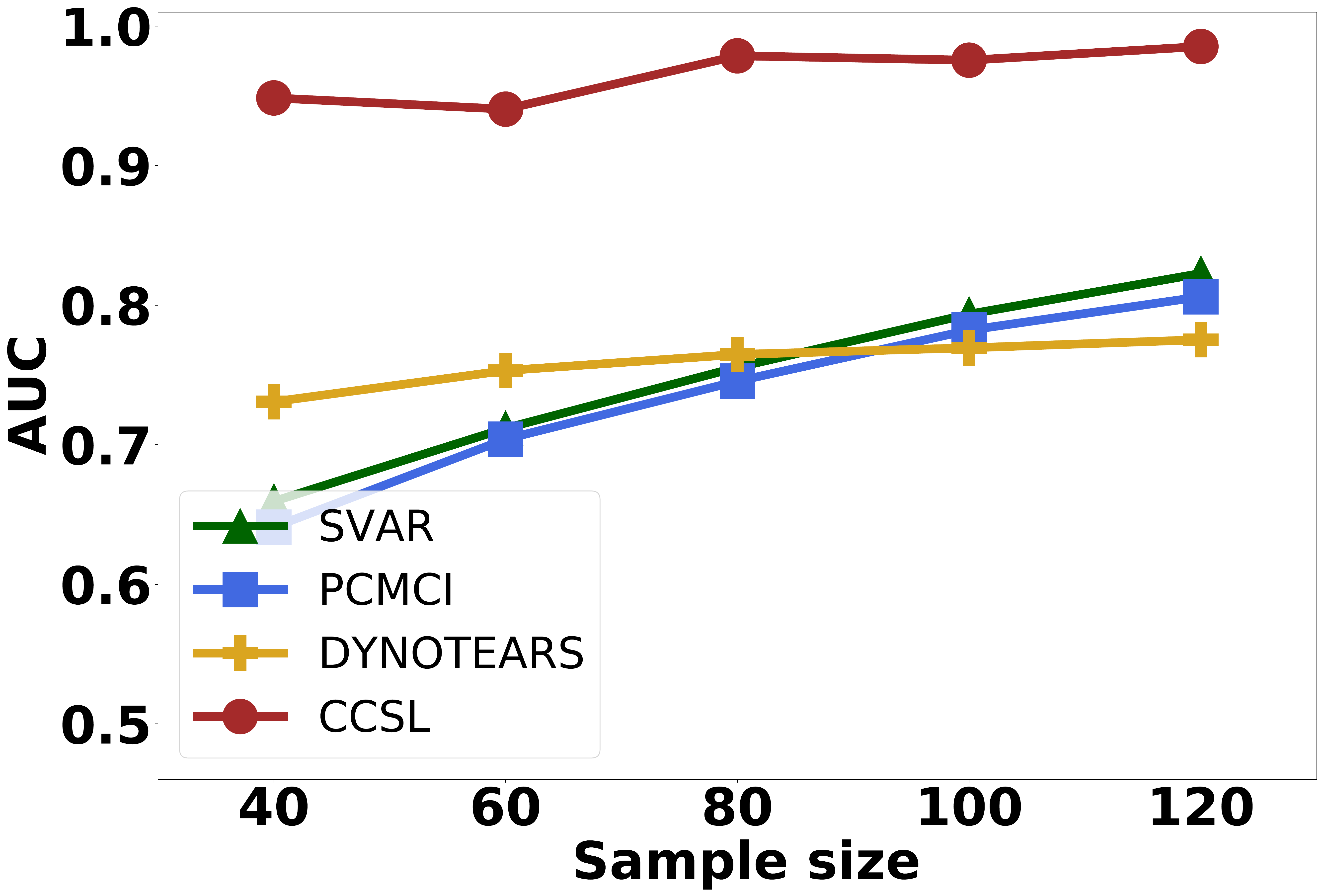}
  \label{fig:AUC_synthetic_sample}
}
\subfigure[Different number of groups]{
  \includegraphics[width=0.45\linewidth]{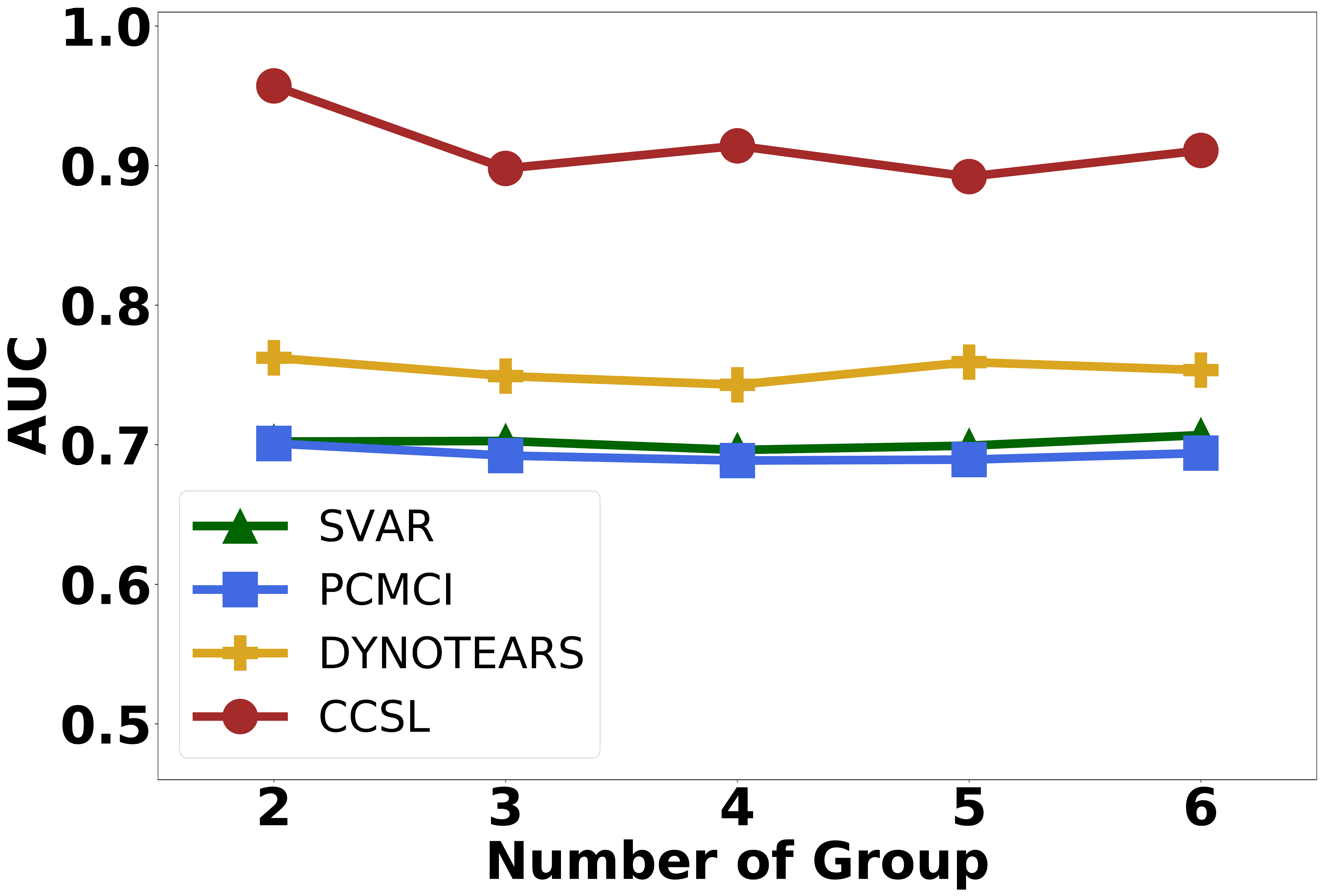} 
  \label{fig:AUC_synthetic_group}
}
\subfigure[Different number of subjects]{
  \includegraphics[width=0.45\linewidth]{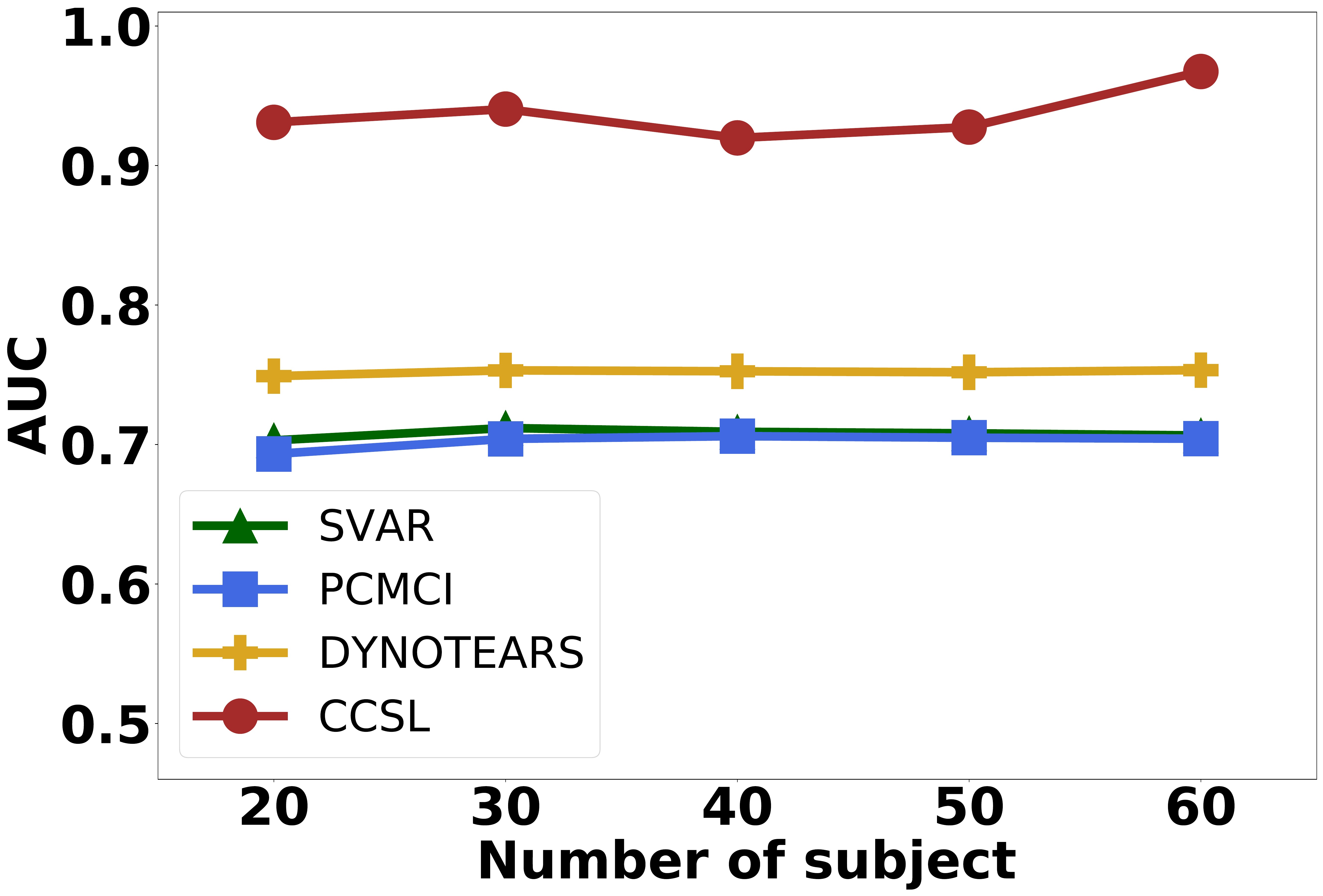} 
  \label{fig:AUC_synthetic_subject}
}
\caption{AUC under different settings compared with the Baseline Methods}
\label{fig.AUC_synthetic}
\end{figure}

Figure \ref{fig.AUC_synthetic} illustrates the results of causal structures learned by different methods. Our methods significantly outperform other baselines on all synthetic datasets. Dynotears performs better than other baselines in most cases, but the AUC of all baselines is less than 0.8. In detail, Figure \ref{fig.AUC_synthetic}(a) shows the performance of causal structure learning is slightly decrease when the number of variables. This is the same as the causal structure learning from i.i.d data. Figure \ref{fig.AUC_synthetic}(b) gives the results under different sample sizes. More sample guarantees the estimation reliable, expect for SVAR and PCMCI. For SVAR, the fitting ability highly depends on the sample. For PCMCI, the independence test needs large sample sizes to make sure the result is consistent with the theory. Figure \ref{fig.AUC_synthetic}(c) illustrates the results under the different number of groups. Although the AUC of all methods is stable in different cases, the results of our method are higher or equal to $0.9$ while the AUC of other baseline methods is lower than $0.8$. It verifies that the causal CRP does a great deal for causal structure learning. Figure \ref{fig.AUC_synthetic}(d) shows the AUC of our method increases as the number of subjects increases, while that of others is unchanged. Because the existence of more subjects can bring more samples, which can assist in the study of causal structure learning. All these results reflect that the datasets from the same group provide more samples to learn the causal structure, which proves the advantage and effectiveness of our method. Moreover, our method does not need to give the number of groups and can spontaneously learn the causal structures.

\paragraph{\textbf{Convergence analysis}}
We also analyze the convergence of our proposed algorithm when applying to the simulation data set. Due to the space limitation, we only show the log-likelihood obtained in every iteration when the number of subjects is $20$ and the sample sizes is $60$ as Figure \ref{fig:convergence}. As shown in Figure \ref{fig:convergence}, the log-likelihood obtained by our model converges at around 18 iterations. This result indicates that our method converges in few iterations and can easily be applied in real-world scenarios.

\begin{figure}
    \centering
    \includegraphics[width=0.98\linewidth]{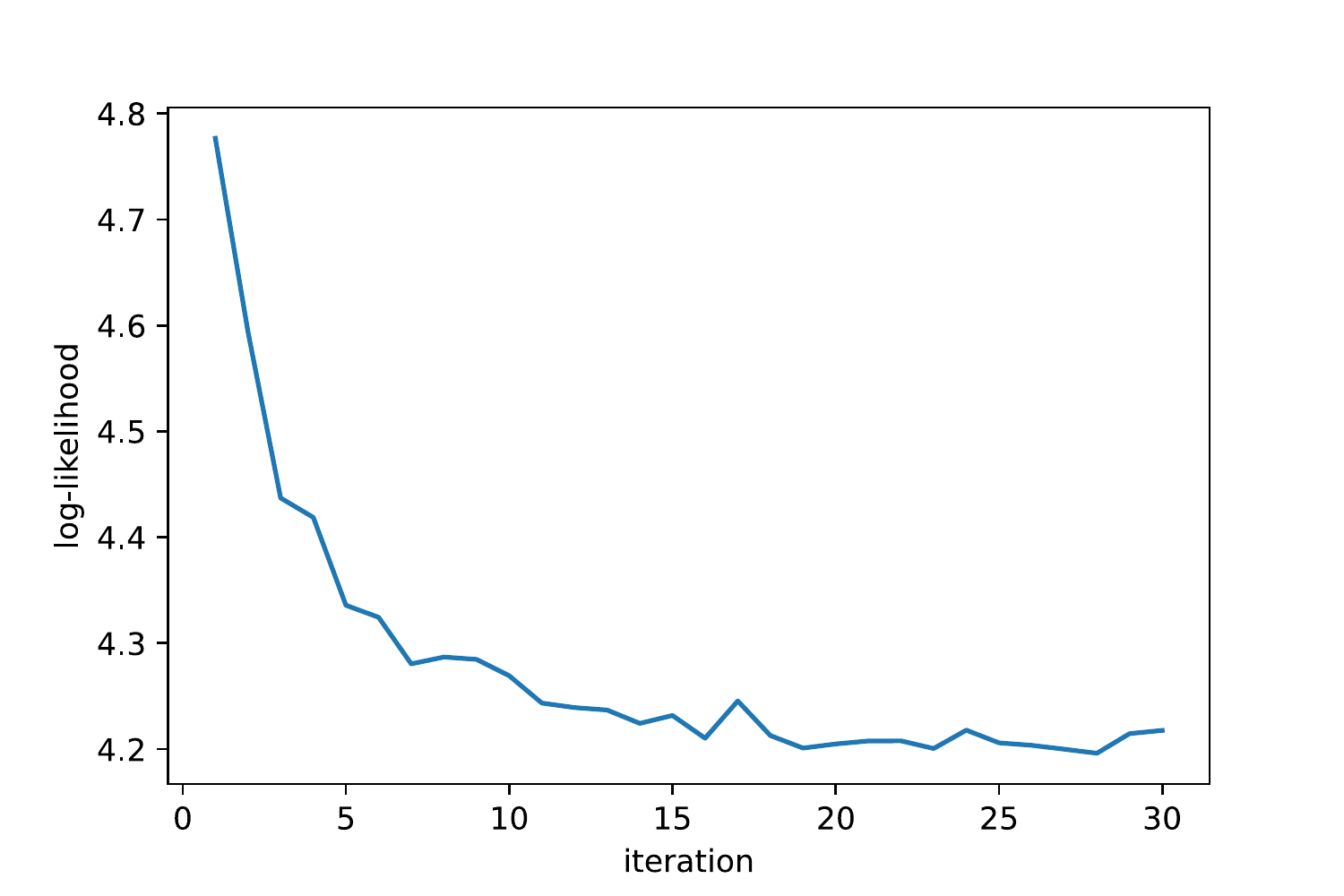} 
    \caption{Convergence analysis of our proposed model}
    \label{fig:convergence}
\end{figure}

\subsection{Real-world Data}
In this section, we apply our method to two real-world data to evaluate the performance of our method.

\paragraph{\textbf{fMRI data}} We apply our method to fMRI data\footnote{\url{https://openfmri.org/dataset/ds000031/}}. This data records the signal of six brain regions in the resting state: perirhinal cortex (PRC), parahippocampal cortex (PHC), entorhinal cortex (EC), subiculum (Sub), CA1, and CA3/Dentate Gyrus (CA3). It collects the data on $84$ successive days. According to the work in \cite{huang2019specific}, we treat the data of every day as a sample and assume the causal structure is fixed on the same day, but may change across different days. With the proposed method, we found that the causal relations between these six regions can be divided into 3 groups, and the corresponding three causal structures are given in Figure \ref{fig:fMRI}.

From Figure \ref{fig:fMRI}, one can see that the three causal structures have some specific causal relationships. In the first group, the edges Sub $\to$ CA3 $\to$ PHC, CA1 $\to$ CA3 and CA1 $\to$ PHC are activated; in the second group, the edges CA3 $\to$ PRC, CA1 $\to$ Sub and EC $\to$ CA3 are activated; in the third group, the edge Sub $\to$ CA1 is also activated, while the edges EC $\to$ CA3 is inhibited. The edge CA1 $\to$ PRC exists in both groups, but with slightly different causal strengths.

\begin{figure}[ht]
    \centering
   \subfigure[Group 1]{
      \includegraphics[width=0.29\linewidth]{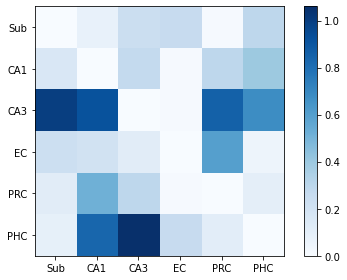}
      \label{fig:fMRI_group1}
    }
    \subfigure[Group 2]{
      \includegraphics[width=0.29\linewidth]{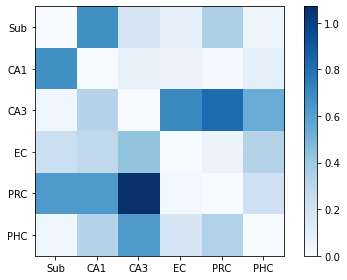}
      \label{fig:fMRI_group2}
    }
    \subfigure[Group 3]{
      \includegraphics[width=0.29\linewidth]{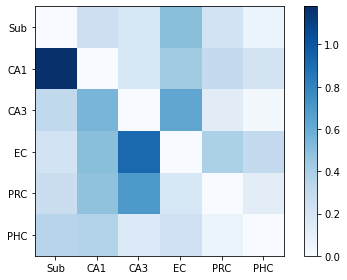}
      \label{fig:fMRI_group3}
    }
	\caption{Three causal structures learned by CCSL for three clustered groups on fMRI data.}
    \label{fig:fMRI}
\end{figure}

\paragraph{\textbf{Sachs data}}
We also apply our method to Sachs dataset \cite{sachs2005causal}. Sachs data contains many cellular protein concentrations in single cells, which are collected from varying interventions. Under different interventions, the samples don't follow the same probability distribution. Because The intervention conditions and other exogenous variables differ between those experiments. So the data sets collected under different interventions can be regarded as that from different environments \cite{huang2019specific,nagarajan2013bayesian}.In this experiment, we use the datasets collected under condition \texttt{cd3cd28+U0126} and under condition \texttt{cd3cd28+aktinhib}. The datasets from different intervention conditions are regarded as different groups. According to each dataset of a subject containing 30 samples, we divide the data into multiple subsets. Thus, each group includes 25 subjects, and the data of a subject contains 11 measured variables and 30 sample sizes. 

\begin{table}[ht]
	\centering
	\caption{Clustering performance on sachs data.}
	\begin{tabular}{|c|c|p{0.9cm}| p{1.72cm}|p{1cm}|p{1cm}|}
		\hline
		Method & CCSL & Kmeans (DTW) & Kmeans (Euclidean) & DBSCAN (DTW) & OPTICS (DTW) \\ \hline
		ARI & \textbf{0.91} & 0.84 & 0.69 & 0.03 & 0.34 \\ \hline
	\end{tabular}
	\label{table:Real_sachs}
\end{table}

After applying our method to those datasets, we find these subjects can be divided into two groups whose learned causal structures are given in Figure \ref{fig:sachs}. Table \ref{table:Real_sachs} shows the clustering performance on Sachs data. CCSL obtain the highest ARI among all the compared methods. DBSCAN (DTW) receives the worst result. This is because some interventions change a few causal relationships, which are not easy to capture the different only measuring the distance in the term of correlation. Causal relationships are indeed to consider for clustering. 

From Figure \ref{fig:sachs}, one can see that most of the causal relationships exist in both groups, while few causal relationships are different. In condition 1, reagents CD3, CD28 and U0126 are used. Then, in group 1, there exist Erk $\to$ Raf $\to$ Mek, which is consistent with the case that Mek is inhibited by Erk. In condition 2, reagents CD3, CD28 and akt-inhibitor are used to directly inhibit Akt. So we can find that the cause of Akt changes.  

\begin{figure}[ht]
    \centering
    \subfigure[The causal structure of group 1]{
		\includegraphics[width=0.45\linewidth]{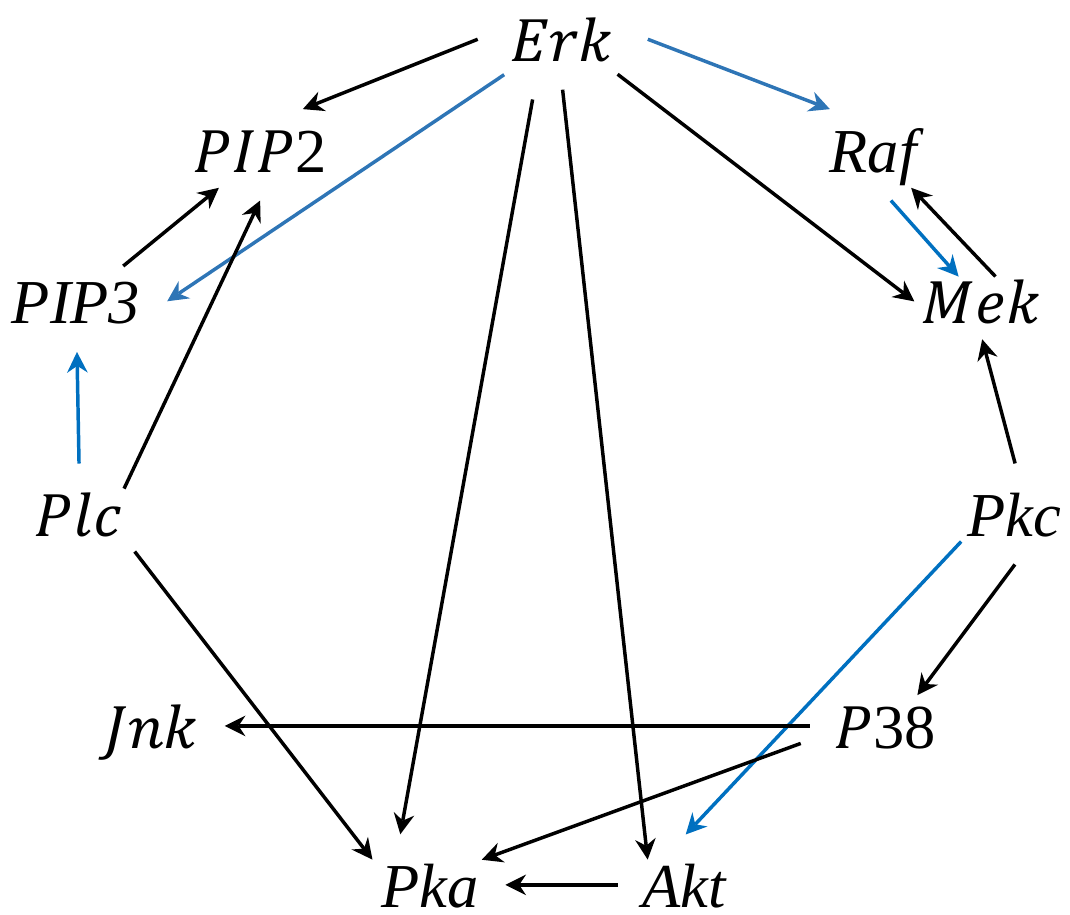}
		\label{fig:sachs_group1}
	}
	\subfigure[The causal structure of group 2]{
		\includegraphics[width=0.45\linewidth]{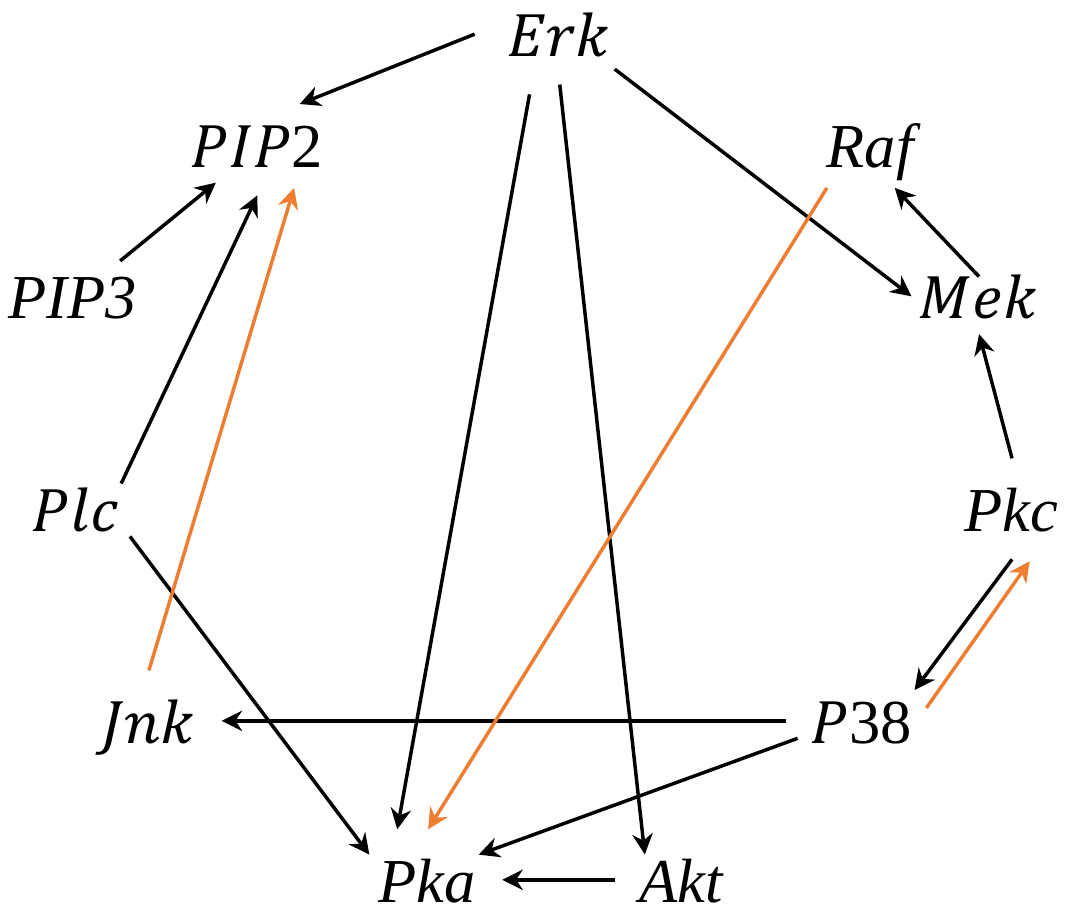}  
		\label{fig:sachs_group2}
	}
	\caption{Two causal structures learned by CCSL for two clustered groups on Sachs data. Blue line in (a) denotes the specified causal edges for group 1. Oranges line in (b) denotes the specified causal edges for group 2. Black line in both sub-figures denotes the common causal edges in both groups.}
	\label{fig:sachs}
\end{figure}


\section{Conclusion}
In this paper, we propose a Clustering and Causal Structure Learning (CCSL) method to cluster samples that share a causal mechanism and to learn their causal structures. We cluster observed samples into different groups using the proposed causality-related Chinese Restaurant Process based on causal mechanisms. At the same time, we estimate the common causal relationships among variables for each group. The combination of these steps leads to an objective function, and subsequently, an optimization solution is provided to solve this problem. The experiments both on synthesis data and real-world data evaluate the correctness and effectiveness of our method in terms of clustering and causal structure learning. Not only has this work provided a causal discovery method for non-i.i.d data, but it also illustrates that the similarity of causal structures can be more effective in clustering samples generated by the same causal mechanism than the similarity of correlation.  

In this work, we only show the identification of the causal clustering structural model under the linear non-Gaussian assumption, but this framework can also be applied to the discrete case, nonlinear case, and so on. For example, with the help of a score-based method or causal functional model, the likelihood function of discrete or nonlinear data can be easily incorporated into Eq. (\ref{eq:causalCRP}). Moreover, the performance of the CCSL model can be improved with the help of prior knowledge of the causal mechanism.


\ifCLASSOPTIONcaptionsoff
  \newpage
\fi



%
%
%
\bibliographystyle{plain}
\bibliography{CCSL}
%

%
%
%




\begin{IEEEbiography}
[{\includegraphics[width=1in, height=1.25in, clip, keepaspectratio]{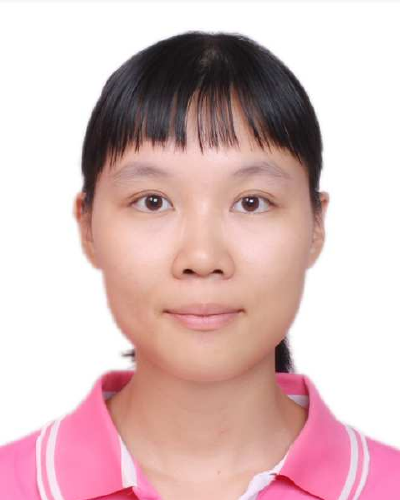}}]{Wei Chen} received the B.S. degree in computer science and the Ph.D. degree in computer application engineering from the Guangdong University of Technology, Guangzhou, China, in 2015 and 2020,
respectively.

She is currently a postdoctoral researcher at the School of Computer, Guangdong University of Technology. She was a visiting student at Carnegie Mellon University, Pittsburgh, PA, USA, from 2018 to 2019. Her research interests include causal discovery and its applications.
\end{IEEEbiography}
\vspace{-8ex}

\begin{IEEEbiography}
[{\includegraphics[width=1in, height=1.25in, clip, keepaspectratio]{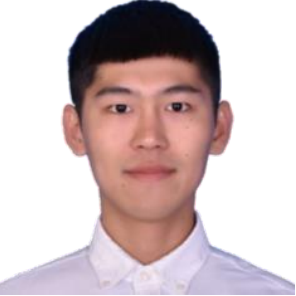}}]{Yunjin Wu} received his B.Eng degree in Internet of Things Engineering in 2019. He is currently pursuing the M.S. degree with the School of Computer, Guangdong University of Technology. 

His research interests include machine learning and its applications.
\end{IEEEbiography}
\vspace{-8ex}

\begin{IEEEbiography}[{\includegraphics[width=1in, height=1.25in, clip, keepaspectratio]{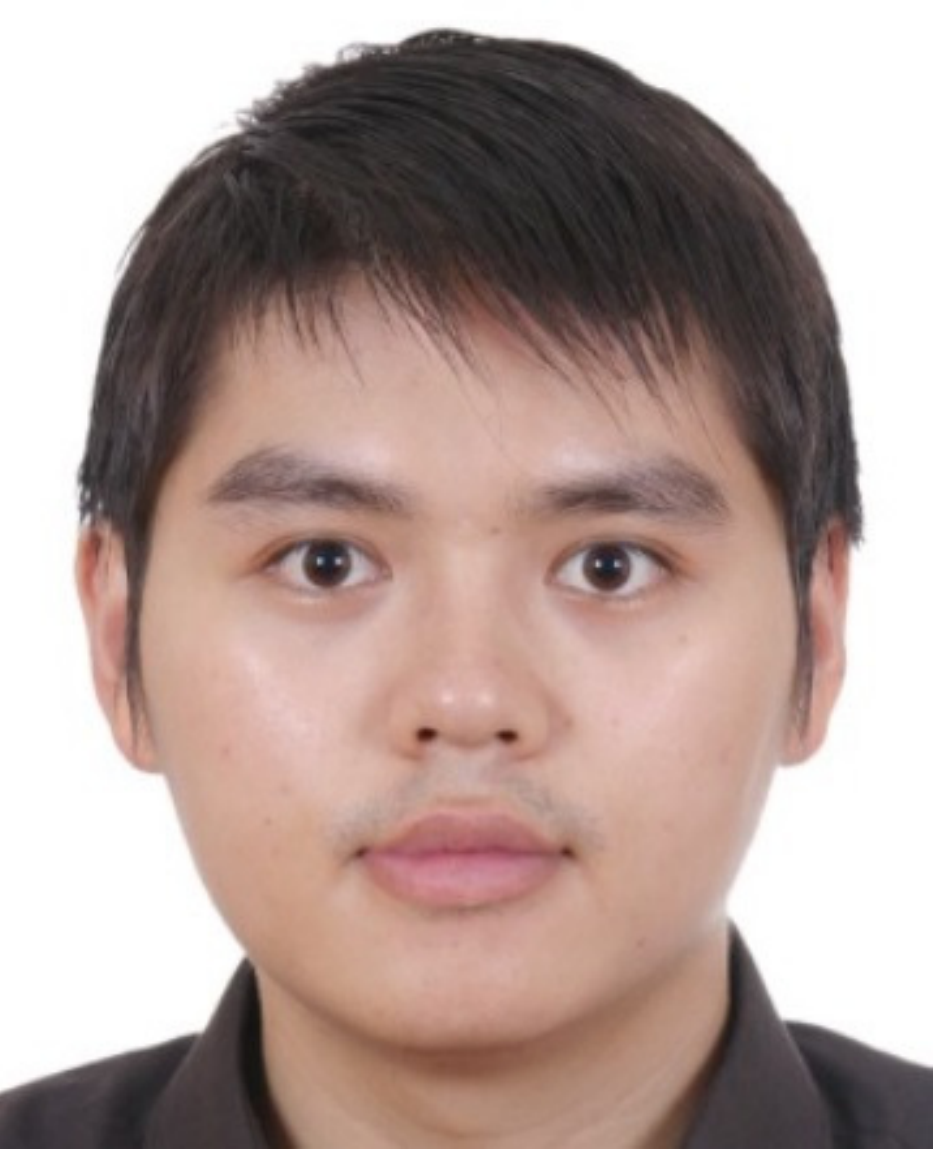}}]{Ruichu Cai} is currently a professor in the school of computer science and the director of the data mining and information retrieval laboratory, Guangdong University of Technology. He received his B.S. degree in applied mathematics and Ph.D. degree in computer science from South China University of Technology in 2005 and 2010, respectively. 
 
 His research interests cover various topics, including causality, deep learning, and their applications. He was a recipient of the National Science Fund for Excellent Young Scholars, the Natural Science Award of Guangdong, and so on awards. He has served as the area chair of ICML 2022, NeurIPS 2022, and UAI 2022, senior PC for AAAI 2019-2022, IJCAI 2019-2022, and so on. He is now a senior member of CCF and IEEE. 
\end{IEEEbiography}
\vspace{-8ex}

\begin{IEEEbiography}[{\includegraphics[width=1in, height=1.25in, clip, keepaspectratio]{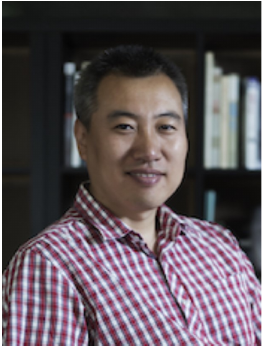}}]{Yueguo Chen} received the BS and master's degrees in mechanical engineering and control engineering from Tsinghua University, Beijing, in 2001 and 2004. He received the PhD degree in computer science from the National University of Singapore in 2009. He is currently an associate professor in the Key Laboratory of Data Engineering and Knowledge Engineering (MOE), Renmin University of China.
	
He was a visiting young faculty at Microsoft Research Asia in 2010 and 2014, and a senior visting scientist at University of Illinois Urbana-Champaign in 2017. His research interests include interactive analysis systems of big data and semantic search.
\end{IEEEbiography}
\vspace{-8ex}

\begin{IEEEbiography}
	[{\includegraphics[width=1in, height=1.25in, clip, keepaspectratio]{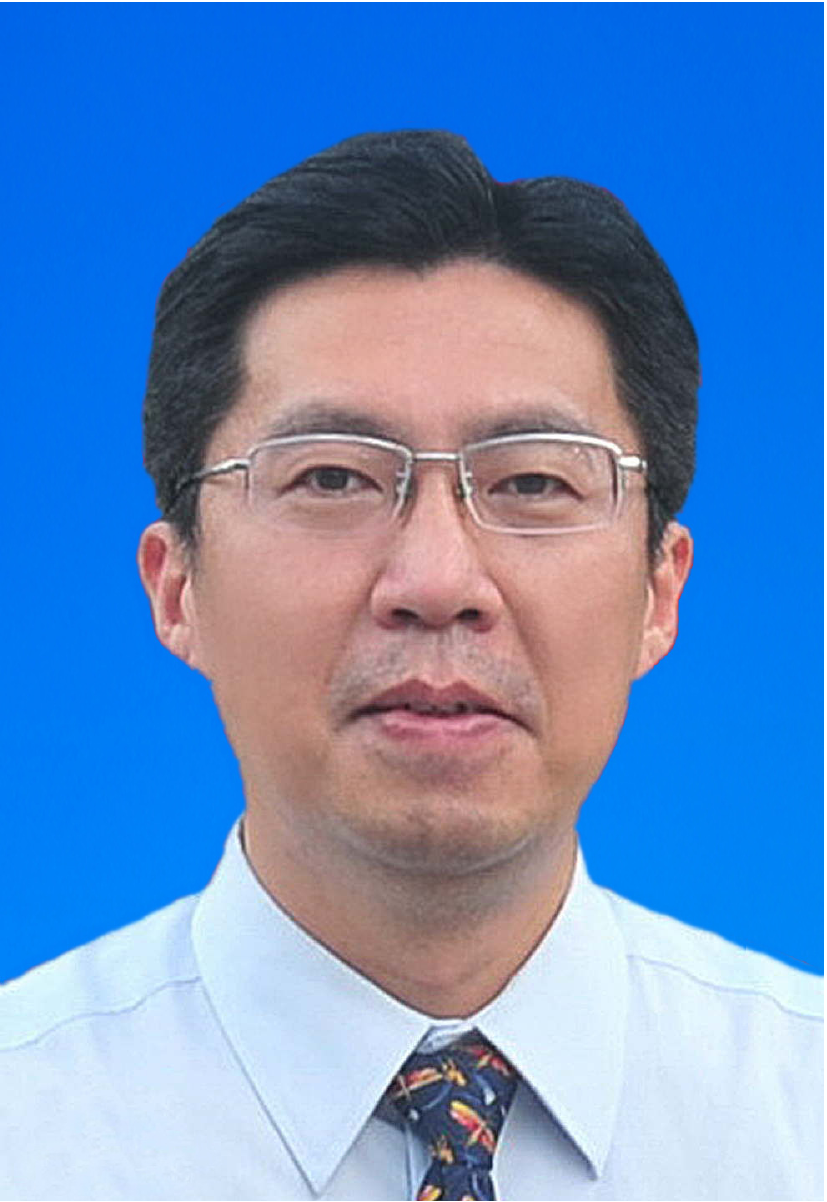}}]{Zhifeng Hao} received his B.S. degree in Mathematics from the Sun Yat-Sen University in 1990, and his Ph.D. degree in Mathematics from Nanjing University in 1995. He is currently a Professor in the School of Computer, Guangdong University of Technology, and College of Science, Shantou University. 
	
	His research interests involve various aspects of Algebra, Machine Learning, Data Mining, Evolutionary Algorithms.
\end{IEEEbiography}

\end{document}